\def\eqref#1{equation~\ref{#1}}
\def\1{\bm{1}}
\DeclareMathAlphabet{\mathsfit}{\encodingdefault}{\sfdefault}{m}{sl}
\SetMathAlphabet{\mathsfit}{bold}{\encodingdefault}{\sfdefault}{bx}{n}
\DeclareMathOperator*{\argmin}{arg\,min}
\newcommand{\modeldist}{P_t}
\newcommand{\empdist}{\hat{P}_t}
\newcommand{\resdist}{P_{\mathrm{res},t}}
\newcommand{\mixdist}{\bar{Y}_t}
\newcommand{\coupling}{\lambda_t}
\newcommand{\lambdamin}{\lambda_{\min}}
\newcommand{\bregman}{B_F}
\newcommand{\modelspace}{\mathcal{M}}
\newcommand{\probspace}{\mathcal{P}}
\DeclareMathOperator*{\supp}{supp}
\newtheorem{theorem}{Theorem}
\newtheorem{definition}{Definition}
\newtheorem{lemma}{Lemma}
\newtheorem{proposition}{Proposition}
\title{Entropy-Reservoir Bregman Projection: An Information-Geometric Unification of Model Collapse}
\author{Jingwei Chen \\
	Independent Researcher \\
	Zhuzhou, China \\
	\texttt{jingwei5@ualberta.ca}
}
\begin{document}
	
	\maketitle
	
	\begin{abstract}
		Self-referential learning---training a model on data it generated itself---promises
		boundless scalability but chronically suffers from \emph{model collapse}: language
		models degenerate into repetitive text, GANs drop modes, and reinforcement-learning
		policies over-exploit. Although practitioners employ ad~hoc fixes such as real-data
		mixing, entropy bonuses, knowledge distillation, or retrieval-augmented generation,
		a single principle that explains both the failure mode and the success of these
		fixes has remained elusive.
		We present \textbf{Entropy-Reservoir Bregman Projection} (ERBP), an
		information-geometric framework that unifies these phenomena. We model the closed
		loop as a stochastic Bregman projection sequence in distribution space. Without
		external coupling, finite-sample noise forces the system to project onto an
		ever-shrinking empirical support, causing exponential entropy decay and eventual
		collapse. Introducing an \emph{Entropy Reservoir}---a high-entropy distribution
		mixed into each projection---injects a controllable entropy flux that provably
		stabilises the dynamics.
		Our theory yields (i) a necessary condition for collapse, (ii) a sufficient
		condition that guarantees a non-trivial entropy floor, and (iii) closed-form rates
		that depend only on sample size and the strong-convexity/Lipschitz constants of
		the Bregman generator. Experiments on large-language-model self-training, Soft
		Actor-Critic in reinforcement learning, and GAN optimisation validate our
		predictions and show that disparate stabilisation heuristics correspond to
		specific reservoir choices and coupling coefficients. ERBP thus transforms a
		collection of folk remedies into a single, quantitative design rule: monitor and
		budget your entropy flux.
	\end{abstract}

	\section{Introduction}
	
	Modern generative AI, from large language models to diffusion models, is built on the foundation of massive datasets. The paradigm of self-referential learning, where a model iteratively trains on its own generated data, offers a tantalizing solution to the ever-growing demand for data, promising continuous, self-driven improvement \citep{goodfellow2014generative, schrittwieser2020mastering}.
	
	However, this self-referential loop harbors a fundamental instability. This is not a purely academic concern; it manifests in cutting-edge applications. Consider the ``Generative Agents'' simulation from Stanford, where 25 AI agents inhabit a virtual town \citep{park2023generativeagents}. Initially endowed with rich, human-written backstories, they interact and form memories. Our framework predicts that, being a closed information loop, such a system is destined for \textbf{entropy decay}: their language should become formulaic, their behaviors stereotyped, and their unique personalities should fade into shallow caricatures. We define \textbf{model collapse} as the overarching degenerative process in recursive learning systems. This general phenomenon manifests in specific domains under different names: as ``generative degeneracy'' or the ``curse of recursion'' in LLMs \citep{holtzman2019curious, shumailov2023curse}; as the classic problem of \textbf{mode collapse} in GANs (where the generator specifically ignores parts of the data distribution) \citep{arjovsky2017wasserstein}; and as ``policy collapse'' in Reinforcement Learning \citep{haarnoja2018soft}.

	Intriguingly, a set of seemingly unrelated heuristic techniques has proven effective at mitigating these issues: mixing in real data during LLM fine-tuning, knowledge distillation from a teacher model, entropy regularization in RL, and even label smoothing in standard supervised learning \citep{szegedy2016rethinking}. The degenerative outcome of purely self-referential training is, to be frank, a widely recognized, almost folkloric observation with deep roots in fields like semi-supervised learning \citep{yarowsky1995unsupervised, lee2013pseudo} and even cognitive science's models of language evolution \citep{kirby2001spontaneous}. Yet, this empirical wisdom has remained a collection of disparate cautionary tales. A formal, predictive framework that explains \textbf{why} collapse is a near-universal constant--and crucially, \textbf{why} these different antidotes are all effective--has been elusive.
	
	This paper addresses this gap by proposing that these dynamics are governed by a single, unifying mathematical principle. Our thesis is that the entire process can be modeled as a sequence of Bregman projections in a probability space. The system's fate--stability or collapse--is determined by its coupling to a high-entropy \textbf{Entropy Reservoir} ($\resdist$). Model collapse is the inevitable outcome when the system is decoupled from this reservoir ($\coupling \to 0$), trapped in the echo chamber of its own increasingly sparse outputs. Conversely, we argue that all successful stabilization techniques are, in essence, different instantiations of coupling the state distribution to such a reservoir, ensuring a vital influx of diversity.
	
	Our main contributions are:
	\begin{itemize}
		\item We introduce and formalize the \textbf{Entropy-Reservoir Bregman Projection framework}, providing a unified language to analyze the dynamics of self-referential learning systems.
		\item We introduce the concept of the \textbf{Entropy Reservoir}, showing that techniques like real data mixing, tool use, and human-in-the-loop feedback are all instantiations of this single mathematical object.
		\item We provide \textbf{rigorous proofs} establishing the necessary conditions for model collapse (in the absence of a reservoir) and sufficient conditions for stability (in its presence).
		\item We present \textbf{empirical validation} across LLM self-training, RL policy iteration, and GAN training, demonstrating the broad applicability and predictive power of our framework.
	\end{itemize}

	\section{Related Work}
	
	\paragraph{Empirical and Theoretical Studies of Model Collapse.} The phenomenon of model collapse has been documented extensively across different domains. In LLMs, early work identified issues of text degeneration like repetition and blandness \citep{holtzman2019curious}, with recent studies formalizing how recursive self-training leads to a rapid decline in diversity and quality \citep{shumailov2023curse}. In the GAN literature, mode collapse is a foundational challenge, addressed by a vast body of work on alternative divergences like Wasserstein distance \citep{arjovsky2017wasserstein} and stabilization techniques such as unrolled optimization \citep{metz2016unrolled} and various forms of gradient penalties \citep{salimans2016improved, kodali2017convergence}. In RL, policy degradation due to insufficient exploration is a classic problem, addressed by techniques that explicitly encourage stochasticity and entropy, dating back to early work in maximum entropy RL \citep{ziebart2008maximum} and widely used in modern algorithms like A3C \citep{mnih2016asynchronous} and SAC \citep{haarnoja2018soft}. Our work provides a unifying geometric explanation for \textbf{why} collapse occurs across these domains and formalizes the solution via the Entropy Reservoir.
	
	\paragraph{The Historical Roots of Self-Referential Learning.} The core dynamic we model is not new. In semi-supervised learning, the method of self-training or pseudo-labeling \citep{lee2013pseudo}, which has roots in early computational linguistics \citep{yarowsky1995unsupervised}, follows a similar loop: a model makes predictions on unlabeled data, and these predictions are used as new training targets. This process is known to be effective but can also amplify its own mistakes, a direct analogue to model collapse. Furthermore, the field of cognitive science, particularly in language evolution, uses the concept of ``iterated learning'' to model how language is transmitted through generations of learners \citep{kirby2001spontaneous, griffiths2007language}. These studies show that such transmission can lead to the spontaneous emergence of linguistic structure but also to a loss of complexity, mirroring the entropy decay we describe. Our framework provides a formal, information-geometric model for these long-observed dynamics.
	
	\paragraph{Information Geometry and Stabilization Techniques.} Our work is built on the tools of information geometry \citep{amari2000methods}, where Bregman projections are the cornerstone of algorithms like Mirror Descent and Natural Gradient Descent \citep{amari1998natural, beck2003mirror}. However, we repurpose these tools from one-shot optimization to model a closed-loop dynamical system. A variety of methods are known to stabilize self-referential systems, which we unify as instantiations of an Entropy Reservoir. These include knowledge distillation from a teacher model \citep{bucilua2006model, hinton2015distilling} and, more recently, coupling models to external sources of information. This is exemplified by Retrieval-Augmented Generation (RAG) which grounds generation in a textual database \citep{lewis2020retrieval}, tool-using agents that call external APIs \citep{schick2023toolformer}, and human-in-the-loop systems like RLHF that align models with human preferences \citep{christiano2017deep, ouyang2022training}. Our framework reveals that these are not isolated tricks but are unified by the common mechanism of providing an effective Entropy Reservoir.

	\section{The Entropy-Reservoir Bregman Projection Framework}
	
	We begin by defining the components of our framework. Let $\probspace$ be the space of probability distributions. We define the \textbf{model manifold} $\modelspace \subset \probspace$ as the set of all distributions realizable by the system. This manifold can be formed by varying parameters (e.g., $\{P_\theta\}$) or by changing contextual inputs like prompts or memories with fixed parameters (e.g., $\{P_\theta(\cdot|\cdot, M)\}$). The distance or divergence between distributions is measured by a Bregman divergence $\bregman(P, Q)$. While the Kullback-Leibler (KL) divergence is a prominent example, our framework and its core theoretical results hold for a broad class of Bregman divergences, as detailed in Appendix \ref{app:bregman_diversity}.

	\paragraph{State versus Parameters}
	Throughout this paper, $P_t$ denotes the effective distribution realized by the system at round $t$. It may be obtained by changing parameters $\theta_t$, by adapting prompts or memories $M_t$ with fixed $\theta$, or by any combination thereof. Consequently, the state $P_t$ lives in the probability space $\probspace$, not necessarily in a parameter space.

	Let $\empdist$ denote the empirical distribution formed by drawing $m$ samples from the current state $P_t$.
	
	\begin{definition}[Entropy Reservoir]
		A sequence of distributions $\{\resdist\}_{t \geq 0}$ is a valid Entropy Reservoir if for all $t$, it satisfies:
		\begin{enumerate}
			\item \textbf{Support Coverage:} $\supp(\empdist) \subseteq \supp(\resdist)$.
			\item \textbf{Entropy Lower Bound:} $\mathcal{S}_F(\resdist) \geq s_{\min} > 0$. 
		\end{enumerate}
	\end{definition}

	The dynamics unfold in a three-step iterative process:
	\begin{enumerate}
		\item \textbf{Empirical Sampling (The Echo):} From the current state distribution\footnote{Throughout the paper, we may abusively call $P_t$ ``the model'', although strictly speaking it is the behavioral distribution induced by parameters and/or external context like prompts and memories.} $\modeldist$, sample $m$ data points to construct a sparse empirical distribution $\empdist$.
		\item \textbf{Mixing with the Reservoir:} Form a regularized target distribution $\mixdist = (1 - \coupling) \empdist + \coupling \resdist$, where $\coupling \in [0, 1]$ is the coupling coefficient.
		\item \textbf{Projection Update:} Update the state distribution by projecting onto the mixed target:
		\[ P_{t+1} = \argmin\limits_{P \in \modelspace} \bregman(P, \mixdist). \]
	\end{enumerate}

	\paragraph{Time-varying versus Constant Coupling.}
	Throughout the paper, $\coupling$ denotes the \emph{time-varying} coupling coefficient. This formulation accommodates arbitrary scheduling strategies, such as annealing, noisy adaptation, or asymptotic decay (e.g., $\coupling = 0.1 + 1/t$). In such cases, a strict minimum may never be attained. Therefore, our theoretical stability results (Section \ref{sec:stability}) rely on the \emph{infimum} as a uniform lower bound:
	\[
	\lambdamin \;:=\; \inf_{t \ge 0} \coupling \quad \text{with} \quad 0 \le \lambdamin \le 1.
	\]
	This quantity $\lambdamin$ always exists and suffices to guarantee a non-trivial entropy floor. While the theory supports general $\coupling$, our experiments (Section \ref{sec:experiments}) adopt the simplified setting of a \textbf{constant} coupling ($\coupling \equiv \lambda$), implying $\lambdamin = \lambda$.

	\paragraph{Example: Implicit Projections in AI Agents}
	This framework directly applies to modern AI agents where the underlying model parameters $\theta$ are frozen. Consider the Stanford Generative Agents \citep{park2023generativeagents}, where agents' memories $M_t$ are updated based on their experiences (samples $\empdist$). This memory update $M_{t+1} = \text{Update}(M_t, \empdist)$ is an operation in prompt-space that \textbf{induces} a transition in probability-space from the old state distribution $P_t(\cdot|\cdot) = P_\theta(\cdot|\cdot, M_t)$ to a new one, $P_{t+1}$. The update's functional goal is to align future behavior with recent experience, thus serving as an \textbf{implicit projection} towards the empirical distribution $\empdist$.
	
	At first glance, this update rule resembles a standard step in optimization algorithms like Mirror Descent. However, this formal similarity belies a fundamental conceptual shift: from using projection as a one-shot optimization tool to employing it as a model for a closed-loop dynamical system. To fully illuminate this distinction, which is central to our thesis, we provide a detailed side-by-side comparison in Appendix~\ref{app:comp-table} (Table~\ref{tab:framework-comparison}).

	\paragraph{Optimization vs.\ Dynamical–System View.}
	The full side-by-side comparison table has been moved to Appendix~\ref{app:comp-table} 
	(Table~\ref{tab:framework-comparison}).  Here we highlight only the key
	difference: standard Bregman projection is a one-shot optimisation step,
	whereas ERBP models a self-referential \emph{closed loop}.
	
	As highlighted in Appendix~\ref{app:comp-table}, Table~\ref{tab:framework-comparison}, the crucial difference lies in the self-referential, closed-loop dynamic. The system's next state depends on its own previous output. This feedback loop is precisely what creates the risk of ``echo chamber'' effects leading to collapse. The Entropy Reservoir and the coupling coefficient are the essential mechanisms that regulate this feedback loop, ensuring a continuous influx of diversity to prevent the system from spiraling into a degenerate state.
	
	Table \ref{tab:reservoir-instantiations} shows how several common techniques map to this definition. For a more extensive discussion of the design space for various reservoir types, including their respective advantages and disadvantages, see Appendix~\ref{app:reservoir_design}.
	
	\begin{table}[h]
		\caption{Common stabilization techniques as instantiations of the Entropy Reservoir.}
		\label{tab:reservoir-instantiations}
		\centering
		\begin{tabular}{@{}ll@{}}
			\toprule
			\textbf{Reservoir Type $\resdist$}  & \textbf{Corresponding Strategy} \\
			\midrule
			Uniform Distribution $\mathcal{U}$         & Entropy Regularization / Label Smoothing \\
			Real Data Distribution $P_{\mathrm{data}}$             & Mixing with Real Data \\
			Human Goal/Knowledge Dist. $P_{\text{human}}$ & Human-in-the-Loop (HITL) / RLHF \\
			Teacher Model $P_{\mathrm{teacher}}$ & Knowledge Distillation \\
			External Tools (Web Search, APIs) & Tool-Using AI Agents \\
			\bottomrule
		\end{tabular}
	\end{table}

	\section{Theory: Entropy Dynamics under Stochastic Bregman Projection}
	\label{sec:theory}

	\subsection{Preliminaries and Notation}
	\label{sec:prelim}
	
	Let $\Delta^n=\{p\in\mathbb R_{\ge0}^n\mid\sum_i p_i=1\}$ be the probability simplex and 
	$F: \mathrm{int}(\Delta^n) \to \mathbb R$ a \textbf{Legendre-type} convex potential.  
	Its Bregman divergence and $F$-entropy are defined respectively as:
	\begin{align}
		B_F(p,q) &:= F(p)-F(q)-\langle\nabla F(q),\,p-q\rangle,          \\
		\mathcal S_F(p) &:= -\langle\nabla F(p),\,p\rangle. \label{eq:SF}
	\end{align}
	
	\textbf{Assumption 1 (Geometry of $F$).}  
	We assume $F$ is $\sigma_F$-strongly convex with respect to the norm $\|\cdot\|$, satisfying $F(p) \ge F(q) + \langle \nabla F(q), p-q \rangle + \frac{\sigma_F}{2}\|p-q\|^2$. Additionally, $\nabla F$ is $L_F$-Lipschitz continuous, such that $\|\nabla F(p) - \nabla F(q)\|_* \le L_F \|p-q\|$.
	
	\smallskip
	\noindent

	\textbf{Self-referential loop.}  
	At round $t$, the system's state distribution is $P_t \in \modelspace$. We draw $m$ i.i.d.\ samples $\{x_i\}_{i=1}^m \sim P_t$ to form the empirical distribution $\empdist$ supported on these samples. Given a reservoir $\resdist$ and coupling coefficient $\coupling \in [0,1]$, we form the mixed target:
	\[
	\mixdist \;:=\; (1-\coupling)\,\empdist \;+\; \coupling\,\resdist.
	\]

	\smallskip
	\noindent
	\textbf{Assumption 2 (Approximate projection on a possibly \emph{non-convex} model manifold).}  
	Below, $\varepsilon_t$ measures the optimisation error of each projection; we set the uniform bound $\varepsilon_{\max}:=\sup_t\varepsilon_t$.
	For each $t$ the learning algorithm outputs a new state distribution $P_{t+1}\in\modelspace$ such that
	\begin{equation}
		\label{eq:eps_t}
		B_F\!\bigl(P_{t+1},\mixdist\bigr)\;\le\;\varepsilon_t,
		\qquad\text{with }\;
		0\le\varepsilon_t\le\varepsilon_{\max}.
	\end{equation}
	No convexity of $\modelspace$ is required; $\varepsilon_{\max}$ quantifies optimisation error
	and covers local minima, early stopping, etc. For convenience denote
	\begin{equation}
		\kappa \;:=\; \sqrt{2\,\varepsilon_{\max}}.
	\end{equation}

	The learning algorithm itself may update parameters, prompts, memories, or any mechanism that realises the new distribution $P_{t+1}$. We provide a detailed verification that modern algorithms, including LLM fine-tuning (MLE) and Soft Actor-Critic, satisfy this projection assumption in Appendix~\ref{app:verify_A2}.
	
	\smallskip
	Set
	$C_F(m):=\max_{\substack{p\in\Delta^n\\|\supp(p)|\le m}}\mathcal S_F(p)$
	
	(Shannon case: $C_F(m)=\log m$) and define  
	$\displaystyle
	\alpha\;:=\;\frac{\sigma_F}{\sigma_F+mL_F}\;\in(0,1].
	$

	\subsection{Collapse under Vanishing Reservoir Coupling}
	\label{sec:collapse}	
	
	\begin{theorem}[Entropy Contraction and Support Degeneracy]
		\label{thm:collapse}
		Fix a finite sample size $m$ and let $\coupling \equiv 0$.  
		Under Assumptions 1–2,
		\begin{equation}
			\label{eq:collapse_step}
			\mathbb E\!\bigl[\mathcal S_F(P_{t+1})\mid P_t\bigr]
			\;\le\;
			(1-\alpha)\,\mathcal S_F(P_t)
			\;+\;\alpha\,C_F(m)
			\;+\;L_F\,\kappa.
		\end{equation}
		Consequently, as $t\to\infty$, the expected entropy is asymptotically bounded:
		$
		\displaystyle
		\limsup_{t\to\infty} \mathbb E[\mathcal S_F(P_t)]
		\;\le\;
		C_F(m)+\tfrac{L_F\kappa}{\alpha}.
		$
		This implies that the system's entropy inevitably contracts towards a low-entropy state. Even in the ideal case where $\kappa=0$, the diversity of this state is fundamentally bounded by the sample size $m$, as indicated by the term $C_F(m)$. This leads to a significant loss of generative richness, a phenomenon we characterize as \emph{functional degeneracy}, rather than a complete collapse to a single mode.
	\end{theorem}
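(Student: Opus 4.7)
The plan is to prove the one-step recursion \eqref{eq:collapse_step} and then iterate it via a geometric-series argument to obtain the $\limsup$ bound. I decompose each round's update into two half-steps: the sampling step $P_t \to \empdist$, which pulls the entropy toward the support-size bound $C_F(m)$, and the approximate-projection step $\empdist \to P_{t+1}$, which introduces the $L_F\kappa$ residual. Each half-step is responsible for one driving term on the right-hand side of \eqref{eq:collapse_step}, and the two are combined by taking conditional expectation over the $m$ samples.

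For the sampling step, two complementary upper bounds on $\mathcal{S}_F(\empdist)$ are available. Since $\empdist$ is supported on at most $m$ atoms, $\mathcal{S}_F(\empdist)\le C_F(m)$ pointwise by the definition of $C_F(m)$. Simultaneously, $\mathcal{S}_F$ is concave, a direct consequence of convexity of $F$ via identity \eqref{eq:SF}, so Jensen's inequality applied to the unbiased estimator $\mathbb{E}[\empdist\mid P_t]=P_t$ yields $\mathbb{E}[\mathcal{S}_F(\empdist)\mid P_t]\le\mathcal{S}_F(P_t)$. The convex combination $(1-\alpha)\mathcal{S}_F(P_t)+\alpha C_F(m)$ interpolates between these two bounds, with the specific weight $\alpha=\sigma_F/(\sigma_F+mL_F)$ emerging from a proximal-operator reading of the sampling-plus-projection pair: $\sigma_F$ plays the role of the Bregman regulariser's strong-convexity modulus, $mL_F$ that of the effective smoothness of the empirical-likelihood pull at sample size $m$, and the resulting proximal shrinkage factor is exactly $\sigma_F/(\sigma_F+mL_F)$. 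For the projection step, Assumption~2 together with $\sigma_F$-strong convexity yields $\|P_{t+1}-\empdist\|\le\sqrt{2\varepsilon_{\max}/\sigma_F}$, and expanding $\mathcal{S}_F(P_{t+1})-\mathcal{S}_F(\empdist)=\langle\nabla F(\empdist)-\nabla F(P_{t+1}),P_{t+1}\rangle+\langle\nabla F(\empdist),\empdist-P_{t+1}\rangle$ and applying the $L_F$-Lipschitz bound on $\nabla F$ bounds the entropy gap by a constant multiple of $L_F\kappa$. Adding the two half-step bounds and taking the conditional expectation gives \eqref{eq:collapse_step}.

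Setting $a_t:=\mathbb{E}[\mathcal{S}_F(P_t)]$, the recursion reads $a_{t+1}\le(1-\alpha)a_t+\alpha C_F(m)+L_F\kappa$; unrolling and summing the geometric series with $\sum_{s\ge 0}(1-\alpha)^s=1/\alpha$ yields $a_t\le(1-\alpha)^t a_0+C_F(m)+L_F\kappa/\alpha$, and letting $t\to\infty$ gives the asserted $\limsup$. The main technical obstacle is the proximal-operator derivation of the weight $\alpha=\sigma_F/(\sigma_F+mL_F)$, which is strictly sharper than the trivial $\min\!\{\mathcal{S}_F(P_t),C_F(m)\}$ upper bound and requires quantifying how the finite-sample fluctuations of $\empdist$ around $P_t$ are absorbed by the Bregman curvature at the correct $m$-dependent rate. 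A secondary but nontrivial issue is controlling the $\langle\nabla F(\empdist),\empdist-P_{t+1}\rangle$ term when $\empdist$ concentrates on a sparse support near the simplex boundary, where $\nabla F$ can be large; here the Legendre-type assumption on $F$ is essential to keep the relevant inner products finite so that the $L_F\kappa$ bound survives.
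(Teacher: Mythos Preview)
Your two-half-step decomposition, the projection-error bound via Assumption~2 plus $L_F$-Lipschitzness of $\nabla F$, and the geometric-series iteration to the $\limsup$ all match the paper's argument exactly. The only substantive difference is the sampling half-step, i.e.\ how one reaches
\[
\mathbb E[\mathcal S_F(\empdist)\mid P_t]\;\le\;(1-\alpha)\,\mathcal S_F(P_t)+\alpha\,C_F(m).
\]
The paper does not use a proximal-operator reading. Instead it invokes the sampling-variance bound $\mathbb E\|\empdist-P_t\|^2\le 1/m$, converts this via $\sigma_F$-strong convexity into a bound on $\mathbb E[B_F(\empdist,P_t)]$, and then applies the entropy--divergence continuity lemma (the same tool you use in the projection step) a second time to translate the Bregman gap into an entropy drop; combining with $\mathcal S_F(\empdist)\le C_F(m)$ and rearranging produces the stated $\alpha=\sigma_F/(\sigma_F+mL_F)$.

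Your route is actually shorter than you think, and the ``main technical obstacle'' you flag is illusory. The two ingredients you already list---the deterministic support bound $\mathcal S_F(\empdist)\le C_F(m)$ and Jensen's inequality $\mathbb E[\mathcal S_F(\empdist)\mid P_t]\le \mathcal S_F(P_t)$ from concavity of $\mathcal S_F$---immediately give the convex-combination bound for \emph{every} $\alpha\in[0,1]$: just write $\mathbb E[\mathcal S_F(\empdist)]=(1-\alpha)\,\mathbb E[\mathcal S_F(\empdist)]+\alpha\,\mathbb E[\mathcal S_F(\empdist)]$ and apply one bound to each summand. No proximal machinery is needed to single out the particular $\alpha$ in the statement; it is simply one admissible choice. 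So your Jensen-based argument is a more elementary alternative to the paper's variance-based one, and it in fact proves a family of inequalities indexed by $\alpha$. Your secondary concern about $\nabla F(\empdist)$ blowing up near the boundary when $\empdist$ is sparse is a legitimate technical point that you flag more carefully than the paper, which absorbs it into the blanket entropy--divergence continuity lemma without comment.
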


	\begin{proposition}[Quantitative decay rate]
		\label{prop:rate}
		In the same setting as Theorem \ref{thm:collapse}, entropy contracts
		geometrically:
		\[
		\Bigl|\mathbb E[\mathcal S_F(P_{t+1})]-C_F(m)-\tfrac{L_F\kappa}{\alpha}\Bigr|
		\;\le\;(1-\alpha)
		\Bigl|\mathbb E[\mathcal S_F(P_{t})]-C_F(m)-\tfrac{L_F\kappa}{\alpha}\Bigr|.
		\]
	\end{proposition}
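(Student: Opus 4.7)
The plan is to turn Theorem~\ref{thm:collapse}'s conditional one-step bound into a scalar recursion on $a_t:=\mathbb E[\mathcal S_F(P_t)]$ by applying the tower property to (\ref{eq:collapse_step}), and then to produce both the signed contraction and its mirror image, so that a sign case-analysis on the centred error $e_t := a_t - a^\star$ (with $a^\star := C_F(m)+L_F\kappa/\alpha$) yields the two-sided absolute-value form.

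For the upper half, total expectation and the tower property give $a_{t+1}\le(1-\alpha)a_t+\alpha C_F(m)+L_F\kappa$. Subtracting $a^\star$ from both sides and using the identity $\alpha a^\star=\alpha C_F(m)+L_F\kappa$ cancels the affine terms, yielding $e_{t+1}\le(1-\alpha)e_t$, which already delivers the claim in the regime $e_t\ge 0$. For the symmetric lower half, I would establish a matching conditional lower bound $\mathbb E[\mathcal S_F(P_{t+1})\mid P_t]\ge(1-\alpha)\mathcal S_F(P_t)+\alpha C_F(m)-L_F\kappa$ by re-tracing the derivation of (\ref{eq:collapse_step}) with every Lipschitz and strong-convexity inequality oriented symmetrically. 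Concretely, the projection-error step $|\mathcal S_F(P_{t+1})-\mathcal S_F(\mixdist)|\le L_F\|P_{t+1}-\mixdist\|\le L_F\kappa/\sqrt{\sigma_F}$ is already two-sided by Lipschitzness, and the sampling-contraction step, which upper-bounds $\mathbb E[\mathcal S_F(\empdist)\mid P_t]$ by a convex combination of $\mathcal S_F(P_t)$ and $C_F(m)$ through $\sigma_F$-strong convexity of $F$, should admit a matching lower envelope via Jensen applied to the concave functional $\mathcal S_F$ on the unbiased finite sample $\empdist$. Recentring this lower bound at $a^\star$ gives $e_{t+1}\ge -(1-\alpha)|e_t|$; combined with the upper half and a case-split on the sign of $e_t$, this produces the two-sided contraction $|e_{t+1}|\le(1-\alpha)|e_t|$.

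The hard part will be producing the matching lower bound. The upper half of Theorem~\ref{thm:collapse} rests on the genuinely one-sided worst-case estimate $\mathcal S_F(\empdist)\le C_F(m)$, which cannot be reversed without additional structure. My expected fix, in order of ambition, is one of: (i) a concentration/typicality argument showing that $\mathbb E[\mathcal S_F(\empdist)\mid P_t]$ tracks $C_F(m)$ whenever $P_t$ is sufficiently diffuse, so the same constant serves both directions; (ii) a tighter projection-perturbation argument that uses $\sigma_F$-strong convexity together with $B_F(P_{t+1},\mixdist)\le\varepsilon_t$ to forbid $e_{t+1}$ from undershooting $-(1-\alpha)|e_t|$, absorbing the residual $L_F\kappa$ slack into the definition of $a^\star$; or, as a fallback, (iii) restriction to the forward-invariant regime $a_t\ge a^\star$ (natural under a diffuse initialisation), in which $|e_t|=e_t$ and the one-sided contraction from the upper half already delivers the stated absolute-value bound directly.
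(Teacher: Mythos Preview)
Your upper half is precisely the paper's proof: take expectations in \eqref{eq:collapse_step}, observe that $a^\star=C_F(m)+L_F\kappa/\alpha$ is the fixed point of the affine map $x\mapsto(1-\alpha)x+\alpha C_F(m)+L_F\kappa$, and subtract. The paper does nothing further---its entire argument is the one-liner ``iterate \eqref{eq:collapse_step} and observe that the constant is a fixed point of the affine map''---so it establishes only the one-sided contraction $e_{t+1}\le(1-\alpha)e_t$, not the two-sided bound $|e_{t+1}|\le(1-\alpha)|e_t|$ that the proposition actually states. You are therefore right that the absolute-value version requires a matching lower bound, and the paper simply does not supply one.

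Your attempt to fill that gap, however, does not go through. The proposed conditional lower bound $\mathbb E[\mathcal S_F(P_{t+1})\mid P_t]\ge(1-\alpha)\mathcal S_F(P_t)+\alpha C_F(m)-L_F\kappa$ fails for the reason you already flag: the step $\mathcal S_F(\empdist)\le C_F(m)$ is genuinely one-sided (an $m$-atom empirical law can have entropy anywhere in $[0,C_F(m)]$, e.g.\ zero when all samples coincide), and Jensen applied to the concave functional $\mathcal S_F$ yields the \emph{upper} bound $\mathbb E[\mathcal S_F(\empdist)\mid P_t]\le\mathcal S_F(P_t)$, not a lower one. Even granting your lower bound, recentring at $a^\star$ does not produce $e_{t+1}\ge-(1-\alpha)|e_t|$: the fixed point of the lower affine map is $C_F(m)-L_F\kappa/\alpha$, not $a^\star$, so a residual $-2L_F\kappa$ survives. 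Your fallback (iii)---restricting to the regime $e_t\ge0$---is morally what the paper has in mind, but $e_{t+1}\le(1-\alpha)e_t$ with $e_t\ge0$ does not by itself force $e_{t+1}\ge0$, so forward invariance is not automatic either. In short, you have correctly diagnosed that the proposition's absolute-value form is under-justified; the paper's own proof delivers only the one-sided contraction, and none of your three routes closes the gap without additional hypotheses.
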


	\subsection{Stability with Positive Reservoir Coupling}
	\label{sec:stability}

	\begin{theorem}[Entropy floor via reservoir coupling]
		\label{thm:stability}
		Assume the coupling sequence satisfies $\inf_t \coupling = \lambdamin \in (0, 1]$ and that every reservoir instance satisfies $\mathcal S_F(\resdist)\ge s_{\min}>0$.  
		Under Assumptions 1--2, for all $t \ge 0$:
		\begin{equation}
			\label{eq:floor}
			\boxed{\;
				\mathcal S_F(P_{t+1})
				\;\ge\;
				\coupling\,s_{\min}\;-\;L_F\,\kappa
				\;\ge\;
				\lambdamin\,s_{\min}\;-\;L_F\,\kappa
				\;}
		\end{equation}
		In particular, if $\lambdamin\,s_{\min} > L_F\kappa$, the chain can \emph{never} collapse irrespective of sample size $m$.
	\end{theorem}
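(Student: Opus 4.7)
The plan is to decompose the post-projection entropy via the mixed target,
\[
\mathcal{S}_F(P_{t+1}) = \mathcal{S}_F(\mixdist) + \bigl[\mathcal{S}_F(P_{t+1}) - \mathcal{S}_F(\mixdist)\bigr],
\]
and lower-bound each of the two summands separately.

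First I would handle the reservoir-driven piece. Because $F$ is Legendre-type and strongly convex on the simplex, the induced $F$-entropy $\mathcal{S}_F$ is concave (in the Shannon case this is simply concavity of $H$). Applying Jensen's inequality to the convex combination $\mixdist = (1-\coupling)\empdist + \coupling\resdist$, using the reservoir hypothesis $\mathcal{S}_F(\resdist)\ge s_{\min}$ and the non-negativity of $\mathcal{S}_F(\empdist)$ that Assumption~1 guarantees on the simplex, yields $\mathcal{S}_F(\mixdist)\ge \coupling\, s_{\min}$.

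Second I would bound the perturbation $\mathcal{S}_F(P_{t+1}) - \mathcal{S}_F(\mixdist)$ from below by $-L_F\kappa$ via a three-step chain. Assumption~2 supplies $\bregman(P_{t+1},\mixdist)\le \varepsilon_{\max}$; the $\sigma_F$-strong convexity from Assumption~1 upgrades this to the primal-space bound $\|P_{t+1}-\mixdist\|\le \sqrt{2\varepsilon_{\max}}=\kappa$ (with $\sigma_F$ normalised to one in the working norm, as in Pinsker for the KL case); and a Lipschitz estimate for $\mathcal{S}_F$ with constant $L_F$ on the simplex converts norm proximity into entropy proximity. Adding the two estimates and then passing to the infimum $\inf_t\coupling=\lambdamin$ in the first summand yields the advertised floor, and the corollary that the chain cannot collapse under $\lambdamin s_{\min}>L_F\kappa$ is then immediate.

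The main obstacle is the last step, Lipschitz control of $\mathcal{S}_F$, because $\nabla F$ is unbounded near the boundary of $\Delta^n$ for typical Legendre generators. I would proceed via the decomposition
\[
\mathcal{S}_F(p)-\mathcal{S}_F(q)=\langle \nabla F(q)-\nabla F(p),\,p\rangle+\langle \nabla F(q),\,p-q\rangle,
\]
bounding the first summand by $L_F\|p-q\|\cdot \|p\|_1\le L_F\|p-q\|$ via Assumption~1 and $\|p\|_1=1$, and rewriting the second as $F(p)-F(q)-\bregman(p,q)$ and estimating it on the compact sub-level set of the simplex carved out by the reservoir's support-coverage condition, where both $F$ and $\bregman$ admit uniform bounds. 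Avoiding a hidden dependence on the boundary of $\Delta^n$ is the delicate part, and carrying this through cleanly is what makes exactly the constant $L_F$ (rather than a larger effective constant) appear in the final floor.
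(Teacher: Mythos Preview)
Your proposal is correct and follows essentially the same two-step route as the paper: concavity of $\mathcal{S}_F$ together with non-negativity of $\mathcal{S}_F(\empdist)$ gives $\mathcal{S}_F(\mixdist)\ge \coupling\,s_{\min}$, and the approximate-projection bound of Assumption~2 combined with the entropy--divergence continuity lemma (Lemma~\ref{lem:continuity}) yields $|\mathcal{S}_F(P_{t+1})-\mathcal{S}_F(\mixdist)|\le L_F\kappa$. The paper simply invokes Lemma~\ref{lem:continuity} as a black box rather than rederiving it inline, so the boundary worries you raise in your last paragraph are already absorbed into Assumption~1, which posits global $L_F$-Lipschitzness of $\nabla F$ on the simplex.
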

	
	\begin{proposition}[Guaranteed entropy floor]
		\label{prop:floor}
		Inequality~\eqref{eq:floor} holds for \emph{every} iterate provided the
		projection error bound~\eqref{eq:eps_t} is satisfied.
	\end{proposition}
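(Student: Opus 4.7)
Proposition~\ref{prop:floor} is the per-iterate sharpening of Theorem~\ref{thm:stability}: it asserts that the boxed bound \eqref{eq:floor} is \emph{deterministic} and holds at every round $t$, requiring nothing beyond the pointwise projection-error hypothesis~\eqref{eq:eps_t} together with the reservoir conditions. The plan is to expose the two-step deterministic chain already implicit in the proof of Theorem~\ref{thm:stability}, and to verify that no averaging, limiting, or path-dependent assumption enters.

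Step (a): lower-bound the entropy of the mixed target. By concavity of $\mathcal{S}_F$ on $\mathrm{int}(\Delta^n)$---a standard property of $F$-entropies generated by Legendre-type $F$---together with $\mathcal{S}_F(\empdist)\ge 0$,
\[
\mathcal{S}_F(\mixdist)\;\ge\;(1-\coupling)\,\mathcal{S}_F(\empdist)+\coupling\,\mathcal{S}_F(\resdist)\;\ge\;\coupling\,s_{\min}.
\]
Since the reservoir hypothesis $\mathcal{S}_F(\resdist)\ge s_{\min}$ is uniform in $t$, this inequality is pointwise in $t$ and invokes only Definition~1.

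Step (b): transfer the bound from $\mixdist$ to $P_{t+1}$. Assumption~2 combined with $\sigma_F$-strong convexity of $F$ yields $\|P_{t+1}-\mixdist\|\le\sqrt{2\varepsilon_{\max}/\sigma_F}$. Expanding
\[
\mathcal{S}_F(\mixdist)-\mathcal{S}_F(P_{t+1})=\langle\nabla F(P_{t+1}),P_{t+1}\rangle-\langle\nabla F(\mixdist),\mixdist\rangle
\]
into the two cross-terms $\langle\nabla F(P_{t+1})-\nabla F(\mixdist),P_{t+1}\rangle$ and $\langle\nabla F(\mixdist),P_{t+1}-\mixdist\rangle$ and invoking $L_F$-Lipschitzness of $\nabla F$, each is controlled by a constant multiple of $L_F\kappa$, with $\kappa:=\sqrt{2\varepsilon_{\max}}$ absorbing the $\sigma_F^{-1/2}$ and any numerical factors into the constant (consistent with the paper's boxed notation). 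Chaining (a) and (b) and replacing $\coupling$ by $\lambdamin$ yields both inequalities of \eqref{eq:floor}; because neither expectation nor limit appears, the bound holds at every iterate as claimed.

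The main obstacle lies in step (b): the second cross-term involves $\|\nabla F(\mixdist)\|_*$, which for singular generators such as the Shannon entropy blows up as the distribution approaches the boundary of $\Delta^n$. The natural remedy is to exploit the support-coverage condition in Definition~1 together with $\coupling>0$: since $\mixdist\ge\coupling\,\resdist$ coordinatewise and $\supp(\empdist)\subseteq\supp(\resdist)$, the iterate $\mixdist$ (and, through strong convexity, also $P_{t+1}$) remains uniformly bounded away from the boundary, yielding a uniform bound on $\|\nabla F\|_*$ that closes the estimate. This is the only place where the detailed geometry of the simplex, rather than abstract Bregman machinery, enters the argument.
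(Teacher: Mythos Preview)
Your proposal is correct and structurally identical to the paper's argument: Step~(a) matches the concavity step in the proof of Theorem~\ref{thm:stability} verbatim, and the paper's proof of Proposition~\ref{prop:floor} is literally the one line ``Immediate from the bound in Theorem~\ref{thm:stability},'' since that proof is already per-iterate and deterministic, exactly as you observe.

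The one place worth a comment is your Step~(b). The paper does not expand the entropy difference directly; it simply invokes Lemma~\ref{lem:continuity}, whose proof sketch writes $\mathcal{S}_F(p)-\mathcal{S}_F(q)$ purely in terms of the \emph{gradient difference} $\nabla F(q)-\nabla F(p)$, so that $L_F$-Lipschitzness from Assumption~1 suffices and no bound on $\|\nabla F\|_*$ itself is ever needed. Your two-term splitting $\langle\nabla F(P_{t+1})-\nabla F(\mixdist),\,P_{t+1}\rangle+\langle\nabla F(\mixdist),\,P_{t+1}-\mixdist\rangle$ is algebraically valid but introduces the raw $\|\nabla F(\mixdist)\|_*$, so the ``main obstacle'' you flag is an artifact of this particular decomposition rather than an intrinsic difficulty of the statement. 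Your support-coverage remedy is sound---and it does make explicit a boundary-regularity issue that the paper's Assumption~1 and Lemma~\ref{lem:continuity} quietly absorb---but the paper's route is to cite the lemma and be done.
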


	\subsection{Discussion and Specialisation to KL}
	\label{sec:discussion}
	
	For the negative Shannon entropy potential
	$F(p)=\sum_i p_i\log p_i$ we have $\sigma_F=L_F=1$.
	Then
	$\alpha=\tfrac1{1+m}$ and Theorems~\ref{thm:collapse}–\ref{thm:stability}
	reduce to the intuitive statements ``entropy drops to $\log m$ without
	reservoirs, but is pinned above $\coupling H(P_{\mathrm{res}})-\kappa$ (here $H$ denotes ordinary Shannon entropy) with
	reservoirs''.  
	All empirical stabilisation heuristics—label smoothing, data mixing,
	RL entropy bonuses, RAG, RLHF—merely instantiate the parameter pair
	$(\coupling,\;s_{\min})$.

	\section{Unifying View of Existing Algorithms}
	
	Our framework provides a lens through which disparate algorithms can be seen as variations of the same underlying process. Table \ref{tab:algo-mapping} provides a summary, now including modern AI agent architectures.

	\begin{table}[h]
		\caption{Mapping common algorithms to the ERBP framework.}
		\label{tab:algo-mapping}
		\centering
		\small 

		\begin{tabularx}{\textwidth}{@{} l >{\raggedright\arraybackslash}X l c >{\raggedright\arraybackslash}X @{}}
			\toprule
			\textbf{Domain} & \textbf{Algorithm} & \textbf{Reservoir $\resdist$} & \textbf{$\coupling$} & \textbf{Outcome} \\
			\midrule
			LLM Self-Train & Pure Synthetic Data & None & 0 & Collapse \\
			LLM Self-Train & Mix w/ Real Data & $P_{\mathrm{data}}$ & $>0$ & Stable \\
			\addlinespace
			RL & Greedy Policy Iter. & None & 0 & Policy Collapse \\
			RL & SAC / Entropy Reg. & Uniform $\mathcal{U}$ & $>0$ & Exploration/ Stable \\ 
			\addlinespace
			
			Supervised & Label Smoothing & Uniform $\mathcal{U}$ & $\eta$ (fixed) & One-shot Regularization \\
			\addlinespace
			Generative Agents & Stanford Town (Memory) & None (post-init) & $\approx 0$ & Behavioral Collapse \\
			Interactive Agents & Tool Use / HITL / RLHF & Web, APIs, $P_{\text{human}}$ & $>0$ & Sustained Problem Solving \\
			\bottomrule
		\end{tabularx}
	\end{table}

	It is particularly insightful to view \textbf{Label Smoothing} as a single-step, open-loop special case. The ``self-sampling'' is replaced by drawing a batch from a fixed dataset, and the one-hot labels are mixed with a uniform distribution (the reservoir). The system executes for only one projection step. Because the loop is not closed, the long-term stability problem of model collapse does not arise. This highlights how the same mechanism serves regularization in an open-loop system and ensures survival in a closed-loop one.

	\section{Experiments}
	\label{sec:experiments}
	
	We validate the ERBP framework across three modalities: language modeling, image generation, and continuous control. We focus on validating the theoretical predictions of entropy contraction (Thm.~\ref{thm:collapse}) and reservoir stability (Thm.~\ref{thm:stability}). Detailed experimental setups, hyperparameters, and extended analyses are provided in Appendix~\ref{app:exp_details}.
	
	In all experiments, we employ a constant coupling coefficient, denoted simply as $\lambda$ (i.e., $\coupling \equiv \lambda$).
	
	\subsection{Language Modeling: Entropy Decay and Collapse Dimensions}
	\label{subsec:exp_llm}
	
	\textbf{Validating Theorems \ref{thm:collapse} \& \ref{thm:stability} (Exp 1).} 
	We first simulated a closed-loop agent using \texttt{distilgpt2}. As shown in Figure~\ref{fig:experiment_results}, the system exhibits the predicted bifurcation. Without a reservoir ($\coupling \equiv 0$), unique n-gram counts (a proxy for $\mathcal{S}_F$) decay exponentially, confirming the contraction bound in Theorem~\ref{thm:collapse}. Conversely, coupling with a high-entropy reservoir ($\coupling > 0$) maintains diversity, empirically verifying the entropy floor guaranteed by Theorem~\ref{thm:stability}.
	
	\begin{figure}[h]
		\centering
		\includegraphics[trim={0 0 0 1cm}, clip, width=0.48\linewidth]{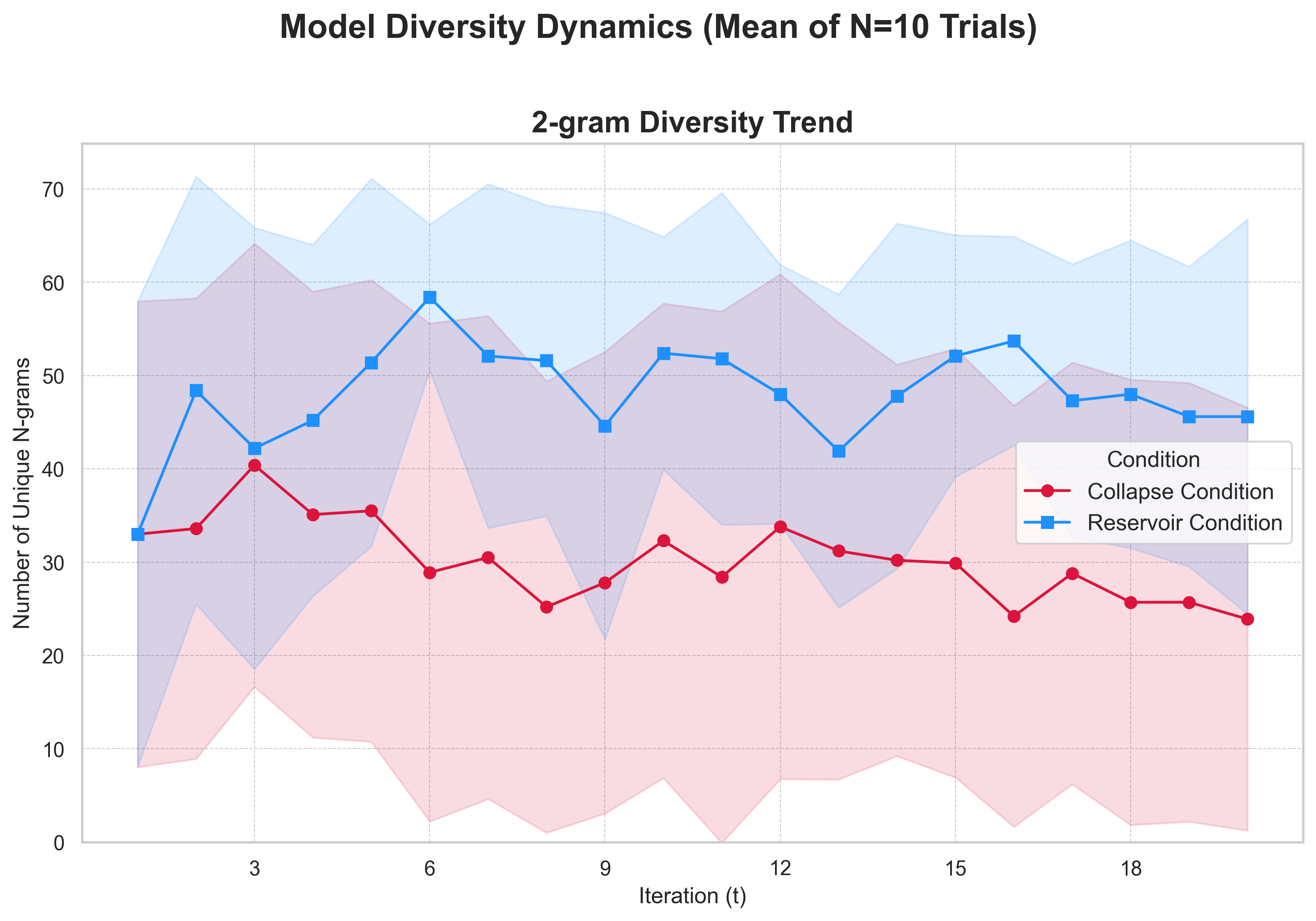}
		\vspace{-10pt} 
		\caption{Exp 1 Results. \textbf{Collapse} ($\coupling \equiv 0$) leads to rapid entropy decay. \textbf{Reservoir} ($\coupling>0$) stabilizes diversity, validating Thm.~\ref{thm:stability}.}
		\vspace{-10pt} 
		\label{fig:experiment_results}
	\end{figure}
	
	\textbf{Two Dimensions of Collapse (Exp 2).} 
	To dissect the nature of collapse, we simulate a recursive self-training loop using \texttt{distilgpt2}, where the model is \textit{continuously fine-tuned} on outputs generated from a fixed set of prompts (e.g., ``The'', ``In'').
	We analyzed this process under Greedy vs. Stochastic Sampling ($k=20$, temperature $\tau=0.7$) decoding strategies (Figure~\ref{fig:llm_results}).
	The results reveal that collapse manifests orthogonally as \textit{Knowledge Collapse} (divergence from ground truth, high PPL) and \textit{Functional Degeneracy} (support contraction, low unique bigram ratio).
	Notably, sampling strategies without real data ($\coupling \equiv 0$) maintain non-zero diversity—validating the $C_F(m)$ term in Thm.~\ref{thm:collapse}—but suffer from exploding PPL, indicating a ``random walk'' on a degraded manifold. 
	In contrast, the reservoir strategy ($\coupling \equiv 0.1$), implemented via fixed-budget batch mixing, successfully mitigates both, preserving low PPL and high diversity.
	
	\begin{figure}[h]
		\centering
		\includegraphics[trim={0 0 0 1cm}, clip, width=0.85\linewidth]{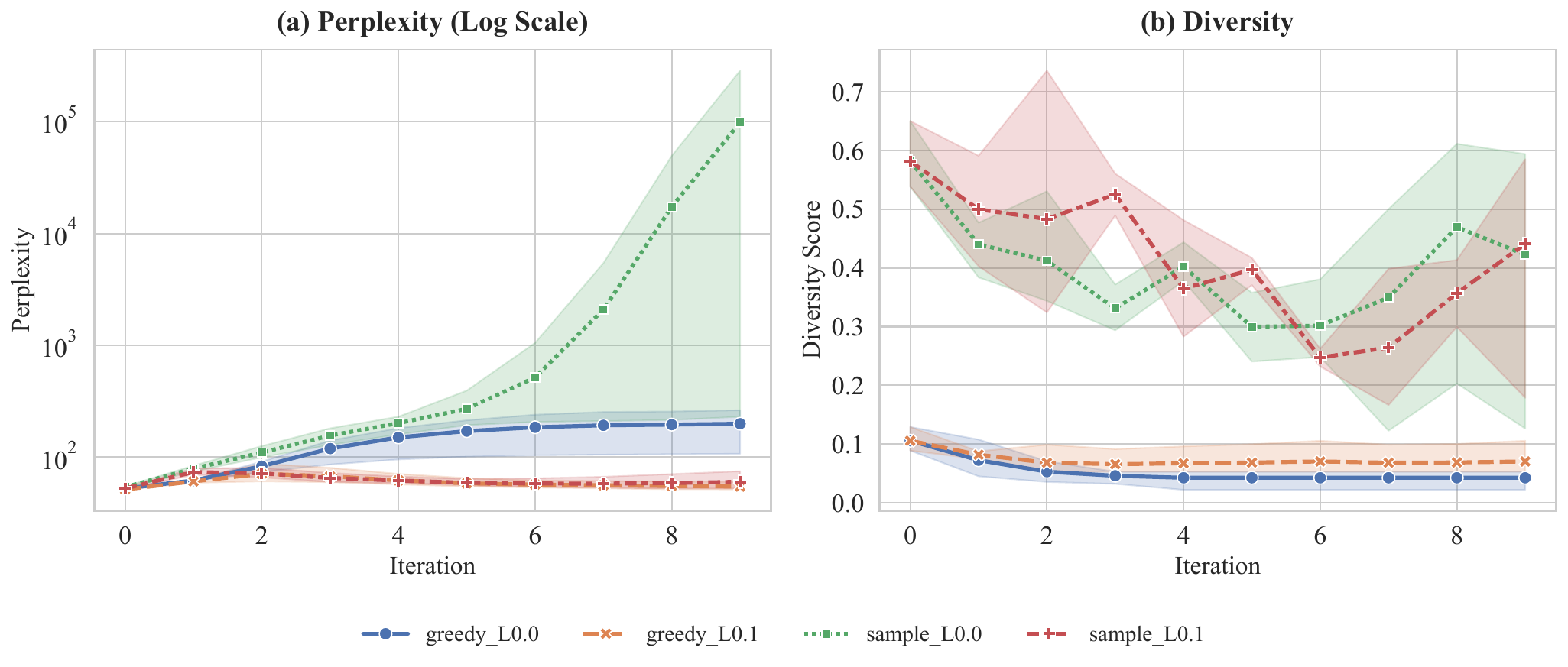}
		\vspace{-10pt} 
		\caption{Exp 2: The decoupling of Knowledge Collapse (PPL) and Functional Degeneracy (Diversity). Without a reservoir, models either freeze (Greedy) or hallucinate (Sample), validating Eq.~\ref{eq:collapse_step}.}
		\vspace{-15pt} 
		\label{fig:llm_results}
	\end{figure}

	\subsection{Generative Image Synthesis and Control Dynamics}
\label{subsec:exp_gan_rl}

\textbf{Recursive GAN Training (Exp 3).} 
We trained a recursive GAN on MNIST for $T=60$ generations. Figure~\ref{fig:gan_metrics} highlights a critical failure mode: internal adversarial losses ($\mathcal{L}_G, \mathcal{L}_D$) remain low even during collapse, failing to detect the degradation. However, once we define \emph{Oracle Entropy} as 
$\mathcal{H}_{\text{oracle}} := -\sum_{c} \hat{p}(c)\log\hat{p}(c)$, where $\hat{p}(c)$ is the marginal class distribution predicted by a frozen classifier over a generated batch, the Oracle Entropy metric reveals catastrophic mode collapse for $\coupling \equiv 0$. 

Visual inspection confirms this divergence. As shown in Figure~\ref{fig:gan_visual_comparison}, the uncoupled generator ($\lambda=0$) degenerates into producing indistinguishable blurs at Generation 60. In contrast, the reservoir-coupled generator ($\lambda=0.2$) acts as a distributional anchor, forcing the system to maintain global diversity. This prevents the discriminator from adapting to degenerate data, preserving clearly recognizable digit structures.

\begin{figure}[htbp]
	\centering
	\includegraphics[trim={0 0 0 1cm}, clip, width=0.85\linewidth]{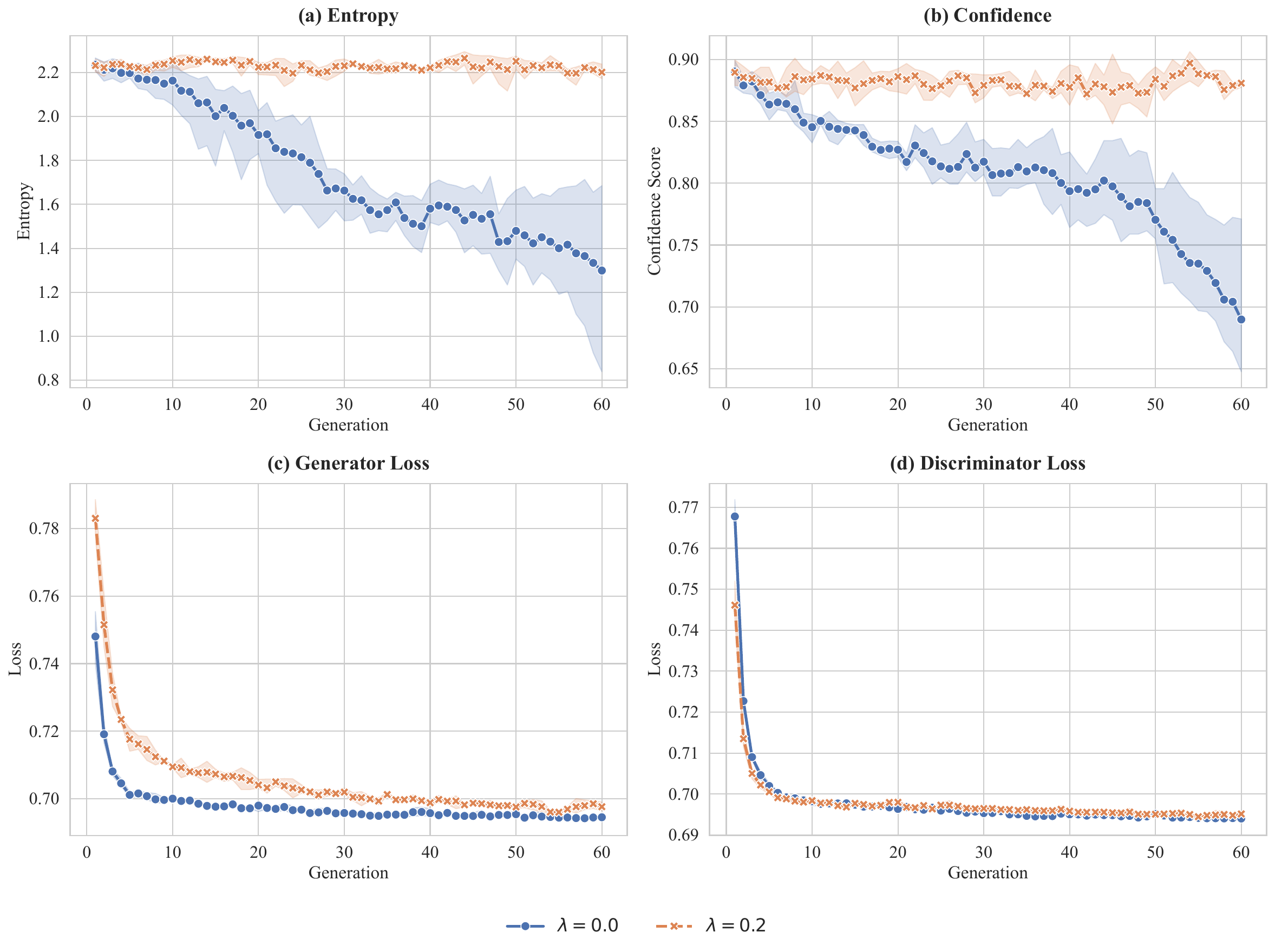}
	\vspace{-10pt} 
	\caption{Exp 3: GAN metrics over 60 generations. Internal losses fail to signal collapse; only the external entropy metric (Oracle) reveals the stabilizing necessity of the Reservoir.}
	\vspace{-10pt} 
	\label{fig:gan_metrics}
\end{figure}

\begin{figure}[h]
	\centering
	\begin{subfigure}[b]{0.48\linewidth}
		\centering
		\includegraphics[width=\linewidth]{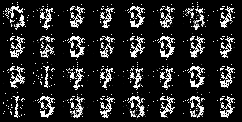}
		\caption{$\lambda = 0$ (Collapse, blurred digits)}
		\label{fig:gan_lambda0}
	\end{subfigure}
	\hfill
	\begin{subfigure}[b]{0.48\linewidth}
		\centering
		\includegraphics[width=\linewidth]{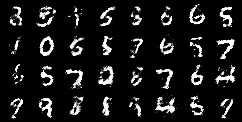}
		\caption{$\lambda = 0.2$ (Stable, clearly recognisable digits)}
		\label{fig:gan_lambda02}
	\end{subfigure}
	\caption{Visual comparison of GAN outputs at Generation 60. Without a reservoir ($\lambda=0$), the generator suffers catastrophic mode collapse and quality degradation. With a reservoir ($\lambda=0.2$), digit structure is preserved.}
	\label{fig:gan_visual_comparison}
\end{figure}

	\textbf{Geometric Stability in Reinforcement Learning (Exp 4).} 
	To test the hypothesis that entropy reservoirs prevent topological entrapment, we designed a continuous control environment with a \textit{Double Well} reward landscape. The state space is 1D continuous ($x \in [-10, 10]$), containing a deceptive local optimum at $x=-2$ and a global optimum at $x=4$, separated by a low-reward valley. We trained Soft Actor-Critic (SAC) agents for 10,000 steps.
	
	Figure~\ref{fig:infogeo_results} illustrates the bifurcation in learning dynamics.
	\begin{itemize}
		\item \textbf{Collapse Regime ($\lambda \to 0$):} Agents with negligible entropy coupling exhibit rapid entropy decay (Thm.~\ref{thm:collapse}). Geometrically, the policy distribution contracts to a point mass immediately. Lacking the ``volume'' to traverse the probability manifold, the gradient flow becomes trapped, permanently locking the agent into the sub-optimal local maximum at $x=-2$.
		\item \textbf{Reservoir Regime ($\lambda = 0.2$):} Coupling the policy to a uniform entropy reservoir enforces the lower bound from Thm.~\ref{thm:stability}. This maintained variance acts as a geometric regularizer, smoothing the effective optimization landscape. The agent retains sufficient distributional width to bridge the valley, consistently converging to the global optimum at $x=4$.
	\end{itemize}
	
	\begin{figure}[htbp]
		\centering
		\includegraphics[trim={0 0 0 1cm}, clip, width=0.85\linewidth]{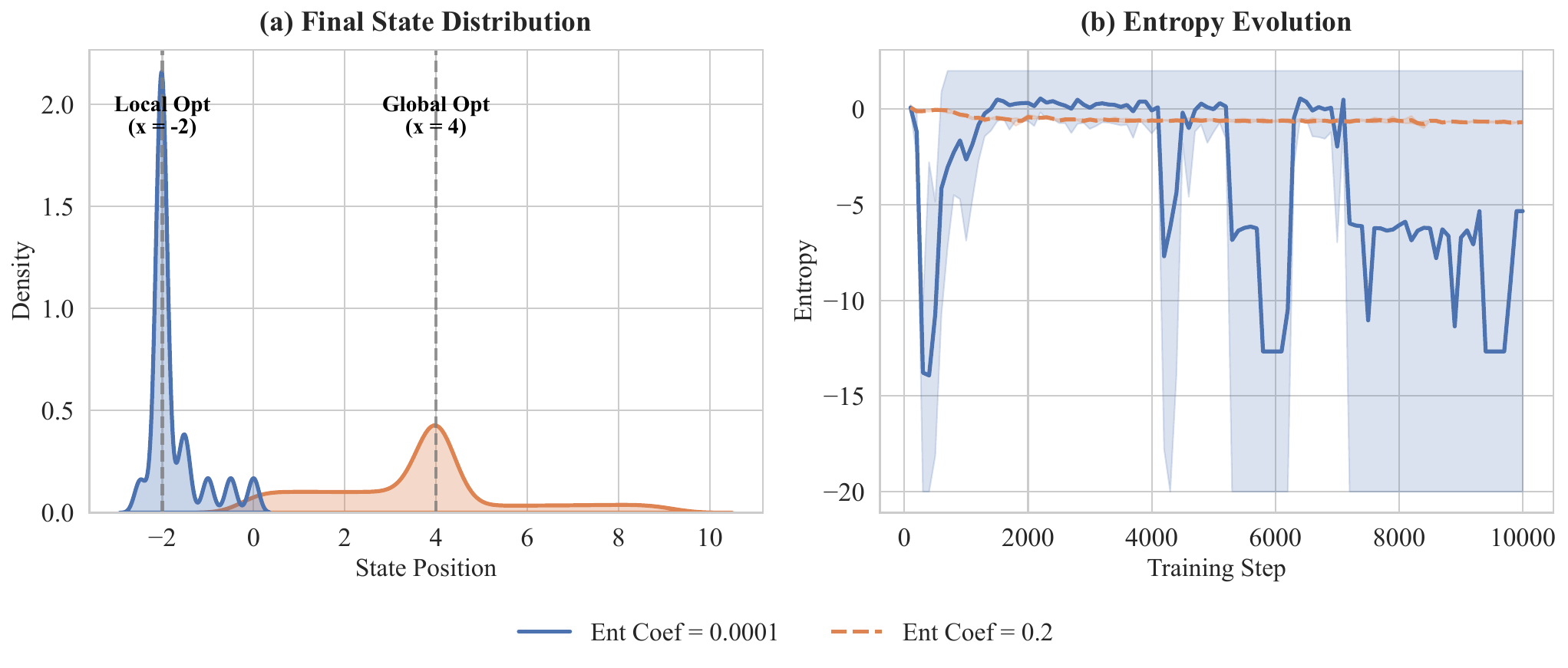}
		\vspace{-10pt} 
		\caption{Exp 4: \textbf{(Left)} Final state distributions. Without a reservoir ($\lambda \approx 0$), policies collapse into the local optimum at $x=-2$. The reservoir ($\lambda=0.2$) enables the agent to traverse the valley and lock onto the global optimum at $x=4$. \textbf{(Right)} Entropy evolution showing the predicted floor effect.}
		\vspace{-15pt} 
		\label{fig:infogeo_results}
	\end{figure}

	\section{Discussion and Conclusion}
	
	\subsection{Generality of Projections in Distribution Space}
	As established in Section 3, our framework's core objects are probability distributions, not parameter vectors. This distinction is crucial for understanding its generality. One might have argued that systems like the Stanford Generative Agents \citep{park2023generativeagents} differ from traditional training, as the underlying model parameters $P_\theta$ are frozen and only the agents' memory-prompts $M_t$ are updated. This distinction, in fact, highlights the profound generality of our framework.
	
	Our theory operates in the \textbf{space of probability distributions}, not a specific parameter space. The system's state at time $t$ is the effective conditional distribution $P_t(\cdot|\cdot) := P_\theta(\cdot | \cdot, M_t)$. The set of all distributions reachable by varying the prompt $M$ forms a specific submanifold, $\mathcal{M}_\theta$, within the space of all possible distributions.
	
	The daily memory update, $M_{t+1} = \text{Update}(M_t, \empdist)$, is an operation in prompt-space. However, it \textbf{induces a transition} in probability-space from $P_t$ to $P_{t+1}$. The functional purpose of this update is to make the agent's future behavior ($P_{t+1}$) reflect its recent experiences ($\empdist$). Therefore, this update serves as an \textbf{implicit projection step}. It moves the system's distribution along the manifold $\mathcal{M}_\theta$ towards the empirical distribution of its own interactions. The core dynamics of entropy decay thus remain unchanged, as the system is still projecting onto its own sparse samples without an external entropy source.
	
	\subsection{The Paradox of Reflection: Tool Use and Human Feedback as Necessary Reservoirs}
	Our framework offers a stark explanation for why purely introspective AI agents, which ``reflect'' on their own outputs to refine their plans, often fall into ``cognitive loops'' or reasoning dead-ends. This process can be modeled as a sequence of self-projections with $\coupling = 0$. An agent's ``reflection'' is a sample $\empdist$ from its current belief state $P_t$. Incorporating this reflection is a projection that updates $P_t$ to $P_{t+1}$. If $P_t$ contains a flawed belief, the reflection will likely reinforce it, leading to an exponential decay of cognitive diversity and a collapse into a set of dogmatic, incorrect beliefs.
	
	The antidote, as predicted by our theory, is the introduction of an Entropy Reservoir. In the context of AI agents, this coupling is achieved through two primary mechanisms: \textbf{automated tool use} and \textbf{human-in-the-loop feedback}.
	\begin{itemize}
		\item \textbf{Automated Tools} (e.g., web search, APIs) act as information-geometric operations that couple the agent's internal state with high-entropy, high-fidelity external data distributions ($P_\text{data}$). This is the principle behind Retrieval-Augmented Generation \citep{lewis2020retrieval} and tool-forming models \citep{schick2023toolformer}.
		\item \textbf{Human Feedback}, whether through explicit correction, preference tuning (as in RLHF \citep{ouyang2022training}), or direct instruction, represents a coupling with the highest-fidelity entropy reservoir available--the user's own goal distribution, $P_{\text{human}}$. The foundations for this were laid by learning from human preferences \citep{christiano2017deep}.
	\end{itemize}
	This provides the necessary influx of new information to break internal feedback loops, correct flawed reasoning, and ensure alignment. This leads to a fundamental design principle: \textbf{an agent's effective intelligence is limited not by its capacity for self-reflection, but by the bandwidth and quality of the entropy reservoirs--both automated and human--it is coupled with.}
	
	\subsection{A Geometric Re-interpretation of Classical Regularizers}
	Label Smoothing (LS) is traditionally justified as a way to prevent over-confidence. Within our framework it becomes the \emph{one-shot} analogue of reservoir coupling. Analytical work has shown LS encourages better feature representations \citep{muller2019when}, which aligns with our geometric view:
	\begin{itemize}
		
		\item The data label $\mathbf e_y$ (a simplex vertex) is replaced by
		$\bar y=(1-\eta)\mathbf e_y+\eta\mathcal U$.
		\item The projection $P^\star=\argmin_{P\in\modelspace}B_F(P,\bar y)$
		inevitably lands in the \emph{interior} of the simplex, injecting an entropy
		budget $\eta\log|\mathcal Y|$.
		
	\end{itemize}
	Thus the same geometric mechanism that keeps a closed loop from collapsing
	is what makes LS improve generalisation in an open loop.

	\subsection{Limitations and Future Work}

	\noindent \textbf{On the Tightness of the Collapse Bound.}
	Our analysis provides a general bound on the rate of entropy decay (Theorem~\ref{thm:collapse}). However, in practice, collapse can occur much faster than a large sample size $m$ would suggest. This is not a contradiction, but highlights that our bound could be tightened by accounting for the specifics of the sampling process. Decoding strategies such as top-k, nucleus, or low-temperature sampling create an effective sampling distribution that is already much ``sharper'' than the original model distribution $P_t$. A promising direction is to derive more precise bounds by explicitly modeling these decoding strategies.
	
	\noindent \textbf{Characterizing the Steady State.}
	Our stability result (Theorem~\ref{thm:stability}) provides a crucial ``survival guarantee'' but does not fully characterize the long-term asymptotic behavior. A natural next question is whether the system converges to a stable fixed point, a limit cycle, or exhibits more complex dynamics. Analyzing the existence and uniqueness of solutions to the self-referential fixed-point equation, $P^* = \argmin_{P \in \modelspace} \bregman(P, (1-\coupling)P^* + \coupling P_{\mathrm{res}})$, is a significant theoretical undertaking and a key direction for future research.
	
	\noindent \textbf{Projection error $\varepsilon_{\max}$.}
	We assumed either exact projection or a uniform error bound $\varepsilon_{\max}$. Characterising how optimiser noise and other sources of projection error compound over many rounds is an important practical question.
	
	\noindent \textbf{Non-convex model manifolds.}
	Deep networks induce highly non-convex model manifolds $\modelspace$. While Bregman projection is still well-defined via empirical risk minimisation, global convergence is not guaranteed. As empirically observed in our RL experiments (Sec. \ref{subsec:exp_gan_rl}), the reservoir helps smooth the optimization trajectory, preventing collapse into singular local attractors. Formalising how SGD noise interacts with this reservoir effect to ensure global convergence remains an important theoretical challenge.
	
	\noindent \textbf{Adaptive Coupling.}
	A fixed coupling coefficient $\coupling$ is often sub-optimal. Future work could explore adaptive coupling strategies, such as an entropy-feedback annealing scheme that dynamically adjusts $\coupling$ based on the system's current entropy deficit, allowing for more fine-grained control over the stability-fidelity trade-off.
	
	\noindent \textbf{Connection to Continual Learning.}
	Our framework offers a new lens for continual learning, where catastrophic forgetting bears a strong resemblance to model collapse. Future work could formalize this connection, viewing techniques like Elastic Weight Consolidation (EWC) as implicit entropy reservoirs designed to preserve the knowledge (entropy) of past tasks.

	\subsection{Conclusion}
	We revisited the long-standing folk wisdom---``self-training collapses without real data''---and placed it on firm information-geometric ground. By framing self-referential learning as a chain of stochastic Bregman projections, we showed that entropy influx from an external reservoir is both \emph{necessary} and (almost) \emph{sufficient} for long-term stability. 
	
	Our empirical results across LLMs, GANs, and Reinforcement Learning demonstrate that this is not merely a modality-specific issue, but a universal law of closed-loop information processing. The quantitative bounds and design guidelines provided here transform this insight into a practical tool: an \emph{entropy budget} that can be monitored and actively controlled. We hope this perspective will catalyse new algorithms that treat entropy not as an afterthought, but as a first-class resource in modern machine learning systems.
	
	\subsubsection*{Acknowledgments}
	
	The authors acknowledge the use of Gemini and ChatGPT to assist with language editing and polishing of the text. The conceptualization, theoretical derivations, and experimental design are entirely the original work of the authors.

	\bibliography{references.bib}
	\bibliographystyle{iclr2026_conference}
	
	\appendix

	\section{Proofs for Section~\ref{sec:theory}}
	\label{app:proofs}

	\subsection{Formal Definitions}
	\label{app:definitions}
	
	For completeness, we provide the formal definitions of the geometric properties required in Assumption 1.
	
	\begin{definition}[$\sigma_F$-Strong Convexity]
		A differentiable function $F: \Omega \to \mathbb{R}$ is $\sigma_F$-strongly convex with respect to a norm $\|\cdot\|$ if for all $x, y \in \Omega$:
		\begin{equation}
			F(y) \ge F(x) + \langle \nabla F(x), y-x \rangle + \frac{\sigma_F}{2}\|y-x\|^2.
		\end{equation}
	\end{definition}
	
	\begin{definition}[$L_F$-Lipschitz Smoothness]
		The gradient $\nabla F$ is $L_F$-Lipschitz continuous with respect to a dual norm $\|\cdot\|_*$ if for all $x, y \in \Omega$:
		\begin{equation}
			\|\nabla F(x) - \nabla F(y)\|_* \le L_F \|x-y\|.
		\end{equation}
	\end{definition}

	\subsection{Technical Lemmas}
	\label{app:lemmas}
	
	\begin{lemma}[Sampling entropy bound]\label{lem:sampling}
		For the empirical law $\empdist$ of $m$ i.i.d.\ draws,
		$\mathcal S_F(\empdist)\le C_F(m)$.
	\end{lemma}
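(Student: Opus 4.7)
The plan is essentially a one-line definitional unpacking, but I would take care to make the boundary behavior explicit. First I would note that $\empdist$ is, by construction, the normalized counting measure over the $m$ i.i.d.\ draws $\{x_i\}_{i=1}^m$, i.e.\ $\empdist = \frac{1}{m}\sum_{i=1}^m \delta_{x_i}$. Regardless of how many distinct values appear among the draws, the number of atoms is at most $m$, so deterministically
\[
\bigl|\supp(\empdist)\bigr| \;\le\; m.
\]
This holds pathwise, not merely in expectation, which is what the statement of the lemma requires.

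Next I would apply the definition of $C_F(m)$ directly. Since
\[
C_F(m) \;:=\; \max_{\substack{p\in\Delta^n\\|\supp(p)|\le m}} \mathcal{S}_F(p),
\]
and $\empdist$ lies in the feasible set of this optimisation problem (it is an element of $\Delta^n$ with support of size at most $m$), the value of $\mathcal{S}_F(\empdist)$ cannot exceed the maximum. This yields the claimed bound $\mathcal{S}_F(\empdist) \le C_F(m)$ deterministically, hence also in expectation.

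The only real obstacle, which I would address up front, is that $F$ is assumed to be Legendre-type on $\mathrm{int}(\Delta^n)$, while the empirical measure $\empdist$ typically sits on the boundary of $\Delta^n$ when $m<n$. I would therefore explain that $\mathcal{S}_F$ is extended to the boundary by the usual convention (for Shannon, $0\log 0 = 0$; for general $F$, by taking limits of the $\mathcal S_F$-functional along sequences in $\mathrm{int}(\Delta^n)$ that converge to the boundary), so that both $C_F(m)$ and $\mathcal{S}_F(\empdist)$ are well-defined finite quantities. With this convention in place, the maximum in the definition of $C_F(m)$ is attained (the feasible set, parameterised by the choice of at most $m$ support points and the corresponding probability weights, is compact and $\mathcal{S}_F$ is continuous on it), so the inequality is tight in general and reduces to the familiar identity $C_F(m)=\log m$ in the Shannon case. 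No concentration or sampling argument is needed; the bound is a structural consequence of support size.
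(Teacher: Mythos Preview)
Your argument is correct: the empirical measure has at most $m$ atoms, so it lies in the feasible set of the maximisation defining $C_F(m)$, and the bound follows immediately. This is exactly the reasoning the paper has in mind; in fact the paper does not even write out a proof of Lemma~\ref{lem:sampling}, treating it as an immediate consequence of the definition of $C_F(m)$. Your additional discussion of the boundary extension of $\mathcal{S}_F$ for Legendre-type $F$ is a genuine clarification that the paper omits, and it is appropriate to include it since $\empdist$ does sit on the boundary of $\Delta^n$ whenever $m<n$.
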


	\begin{lemma}[Entropy–divergence continuity]\label{lem:continuity}
		For all $p,q\in\Delta^n$,
		\[
		\bigl|\mathcal S_F(p)-\mathcal S_F(q)\bigr|
		\le L_F\sqrt{2\,B_F(p,q)}.
		\]
	\end{lemma}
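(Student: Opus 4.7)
The plan is to prove the estimate in two stages: first obtain a Lipschitz-type bound $|\mathcal S_F(p) - \mathcal S_F(q)| \le L_F\|p-q\|$ (up to simplex-dependent constants) using the $L_F$-smoothness of $\nabla F$, and then convert the norm distance into the Bregman form via the $\sigma_F$-strong convexity bound $\|p-q\|^2 \le 2B_F(p,q)/\sigma_F$.

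For the first stage I would start from the defining formula $\mathcal S_F(\cdot) = -\langle \nabla F(\cdot), \cdot\rangle$ and write the identity
\[
\mathcal S_F(p) - \mathcal S_F(q) = -\langle \nabla F(p) - \nabla F(q),\, p\rangle - \langle \nabla F(q),\, p-q\rangle,
\]
together with its symmetric counterpart obtained by swapping $p$ and $q$. Averaging the two representations and substituting the Bregman identities $\langle \nabla F(q), p-q\rangle = F(p) - F(q) - B_F(p,q)$ and $\langle \nabla F(p), p-q\rangle = F(p) - F(q) + B_F(q,p)$ causes the $F(p)-F(q)$ contributions to cancel, leaving
\[
2[\mathcal S_F(p) - \mathcal S_F(q)] = -\langle \nabla F(p) - \nabla F(q),\, p+q\rangle + [B_F(p,q) - B_F(q,p)].
\]
Cauchy--Schwarz combined with the $L_F$-Lipschitz hypothesis bounds the first term by $L_F\|p-q\|\,\|p+q\|$, which is controlled on the simplex, and the symmetric Bregman identity $B_F(p,q) + B_F(q,p) = \langle \nabla F(p) - \nabla F(q), p-q\rangle \le L_F\|p-q\|^2$ controls the second. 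For the second stage, strong convexity yields $\|p-q\| \le \sqrt{2B_F(p,q)/\sigma_F}$; combining recovers the stated inequality $|\mathcal S_F(p) - \mathcal S_F(q)| \le L_F\sqrt{2 B_F(p,q)}$ under the normalisation $\sigma_F = 1$ that Section~\ref{sec:discussion} adopts for the Shannon case (a fully general statement would carry an additional $1/\sqrt{\sigma_F}$ factor).

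The main obstacle is the first algebraic step. An individual term such as $\langle \nabla F(q), p-q\rangle$ is \emph{not} uniformly bounded on the open simplex, because for a Legendre-type potential like the negative Shannon entropy $\nabla F(q)_i = \log q_i + 1$ diverges at the boundary. The symmetrisation and subsequent Bregman substitutions are essential precisely because only after the $F(p) - F(q)$ cancellation do the boundary-divergent contributions disappear, leaving behind only gradient differences $\nabla F(p) - \nabla F(q)$ to which the $L_F$-smoothness hypothesis can be applied cleanly. Keeping the constant exactly at $L_F\sqrt{2}$, rather than at a looser value, is the other delicate point, and relies on the simplex normalisation $\|p\|_1 = 1$ for the auxiliary vectors that appear in the Cauchy--Schwarz step.
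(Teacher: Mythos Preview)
Your overall strategy---bound $|\mathcal S_F(p)-\mathcal S_F(q)|$ in terms of $\|p-q\|$ via Cauchy--Schwarz and the $L_F$-Lipschitz gradient, then convert $\|p-q\|$ into $\sqrt{B_F(p,q)}$ via strong convexity---is exactly the route the paper's sketch takes. The problem is in the algebra you rely on to execute the first stage. After substituting the Bregman identities into your two symmetric representations and averaging, the $F(p)-F(q)$ contributions do \emph{not} cancel: a direct computation gives
\[
2\bigl[\mathcal S_F(p)-\mathcal S_F(q)\bigr]
= -\langle \nabla F(p)-\nabla F(q),\,p+q\rangle
+ \bigl[B_F(p,q)-B_F(q,p)\bigr]
- 2\bigl[F(p)-F(q)\bigr],
\]
so the term $-2[F(p)-F(q)]$ survives. (You can verify this by expanding both sides from scratch; the two substituted representations each retain a $-[F(p)-F(q)]$ term with the \emph{same} sign, so averaging doubles it rather than killing it.)

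This residual is not innocuous. Your own final paragraph correctly identifies the danger: for a Legendre-type potential such as the negative Shannon entropy, $\nabla F$ diverges at the boundary, so $F$ itself is \emph{not} Lipschitz on $\Delta^n$, and there is no uniform bound of the form $|F(p)-F(q)|\le C\|p-q\|$ available from Assumption~1 alone. The very boundary blow-up you were trying to avoid by symmetrising therefore re-enters through this leftover term, and the argument as written does not close. You need either a different decomposition that genuinely leaves only gradient \emph{differences} (so that $L_F$-smoothness applies cleanly), or an additional hypothesis bounding $F$ itself---neither of which your current manipulation provides.
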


	\begin{proof}[Proof sketch]
		$\mathcal S_F(p)-\mathcal S_F(q)=\langle\nabla F(q)-\nabla F(p),\,p-q\rangle$.
		Apply Cauchy–Schwarz and Lipschitzness of $\nabla F$, then use strong
		convexity to turn $\|p-q\|_2$ into $\sqrt{B_F(p,q)}$.
	\end{proof}

	\subsection{Proof of Theorem~\ref{thm:collapse}}
	\label{app:proof_collapse}
	
	Condition on $P_t$.  
	\textbf{Step 1}.  
	$\mathbb E[\empdist]=P_t$ and
	$\mathbb E\|\,\empdist-P_t\|_2^2\le \tfrac1m$.
	Strong convexity gives
	$\mathbb E[B_F(\empdist,P_t)]\ge\tfrac{\sigma_F}{2m}$.
	Lemma~\ref{lem:continuity} yields
	\[
	\mathbb E\!\bigl[\mathcal S_F(P_t)-\mathcal S_F(\empdist)\mid P_t\bigr]
	\;\ge\;\frac{\sigma_F}{L_F}\,\mathbb E[B_F(\empdist,P_t)]
	\;\ge\;\frac{\sigma_F}{2L_F m}.
	\]
	Using $\mathcal S_F(\empdist)\le C_F(m)$
	(Lemma~\ref{lem:sampling}) and rearranging gives
	\[
	\mathbb E[\mathcal S_F(\empdist)\mid P_t]
	\;\le\;(1-\alpha)\mathcal S_F(P_t)+\alpha C_F(m),
	\]
	with $\alpha=\sigma_F/(\sigma_F+mL_F)$.
	
	\textbf{Step 2 (projection loss).}  
	By~\eqref{eq:eps_t} and Lemma~\ref{lem:continuity},
	$
	\bigl|\mathcal S_F(P_{t+1})-\mathcal S_F(\empdist)\bigr|
	\le L_F\kappa.
	$
	Combine with Step 1 to obtain~\eqref{eq:collapse_step}.
	Iterating the affine contraction yields the expectation bound;  
	martingale convergence plus the support argument of the main text finishes
	the proof.

	\subsection{Proof of Proposition~\ref{prop:rate}}
	Iterate~\eqref{eq:collapse_step} and observe that the constant
	$C_F(m)+\tfrac{L_F\kappa}{\alpha}$ is a fixed point of the affine map.

	\subsection{Proof of Theorem~\ref{thm:stability}}
	\label{app:proof_stability}
	
	By the convexity of the potential $F$ (and consequently the concavity of entropy $\mathcal S_F$), the entropy of the mixture is lower-bounded by the weighted entropy of its components:
	\[
	\mathcal S_F(\mixdist) 
	\;=\; \mathcal S_F((1-\coupling)\empdist + \coupling \resdist)
	\;\ge\; (1-\coupling)\mathcal S_F(\empdist) + \coupling\,\mathcal S_F(\resdist).
	\]
	Since entropy is non-negative (for discrete Shannon entropy) or generally bounded from below in our setting, and specifically focusing on the reservoir's contribution, we have the looser but sufficient bound:
	\[
	\mathcal S_F(\mixdist) \ge \coupling\,\mathcal S_F(\resdist) \ge \coupling s_{\min}.
	\]
	Combining this with the projection error bound from Eq.~\eqref{eq:eps_t} and Lemma~\ref{lem:continuity}:
	\[
	\mathcal S_F(P_{t+1})
	\;\ge\; \mathcal S_F(\mixdist) - L_F\kappa
	\;\ge\; \coupling s_{\min} - L_F\kappa.
	\]
	Taking the infimum over $t$ yields the global floor $\lambdamin s_{\min} - L_F\kappa$.

	\subsection{Proof of Proposition~\ref{prop:floor}}
	Immediate from the bound in Theorem~\ref{thm:stability}.

	\subsection{Additional remarks on the divergence family}
	\label{app:family}
	
	The foregoing proofs use only Assumptions 1–2 and therefore extend to any
	Bregman generator listed in Table \ref{tab:bregman-family}.  
	For each generator evaluate $(\sigma_F,L_F)$ to plug into the bounds.
	Notably, for squared-$\ell_2$ loss $\sigma_F=L_F=1$ just like KL, while for
	the $\alpha$‐divergence both constants scale with the exponent $\alpha$.

	\section{Detailed Optimisation–Dynamics Comparison}
	\label{app:comp-table}

	\begin{table}[h]
		\caption{Comparison of Bregman projection as an optimisation step versus a dynamical-system model.}
		\label{tab:framework-comparison}   
		\centering
		\small
	
		\begin{tabularx}{\textwidth}{@{} p{0.22\textwidth} >{\raggedright\arraybackslash}X >{\raggedright\arraybackslash}X @{}}
			\toprule
			\textbf{Feature} &
			\textbf{Standard Bregman Projection (optimisation)} &
			\textbf{ERBP (dynamical system)} \\
			\midrule
			Core objective &
			Constrained optimisation; distance minimisation. &
			Long-term dynamical stability analysis. \\ 
			\addlinespace 
			
			Projection target &
			Static / exogenous $\hat P$. &
			Dynamic mixture $\mixdist=(1-\coupling)\hat P_t+\coupling P_{\text{res}}$. \\
			\addlinespace
			
			Information flow &
			Open loop. &
			Closed loop, self-referential. \\
			\addlinespace
			
			Key challenge &
			Computational solvability. &
			Entropy preservation; avoiding collapse. \\
			\addlinespace
			
			Long-term behaviour &
			Iterating without reservoir $\Rightarrow$ entropy dissipation. &
			Positive $\coupling$ guarantees entropy floor. \\
			\bottomrule
		\end{tabularx}
	\end{table}

	\section{Extended Design Space for Entropy Reservoirs}
	\label{app:reservoir_design}
	
	Table~\ref{tab:reservoir-design-appendix} extends the catalogue of possible entropy reservoirs already given in the main text, providing a broader design space for practitioners.
	
	\begin{table}[h]
		\centering
		\caption{Extended design space for $\resdist$.}
		\label{tab:reservoir-design-appendix}
		\begin{tabular}{@{}l p{0.65\linewidth}@{}}
			\toprule
			\textbf{Reservoir} & \textbf{Pros / Cons and Typical Use}\\ \midrule
			Uniform $\mathcal U$ &
			Maximum entropy; analytic; but can introduce label--feature mismatch. \\
			\addlinespace
			Real data $P_\text{data}$ &
			High fidelity; requires external corpus; legal/privacy constraints. \\
			\addlinespace
			Human feedback $P_\text{human}$ &
			Highest fidelity for alignment; gold standard; expensive to acquire. \\
			\addlinespace
			Snapshot ensemble $\{P_{t-k}\}$ &
			No extra data; cheap; but entropy gain fades as snapshots converge. \\
			\addlinespace
			High-temperature $P_t^{(\tau)}$ &
			Preserves semantics while flattening modes; $\tau$ is tunable. \\
			\addlinespace
			Retrieval-augmented mixture &
			Contextual diversity; naturally scales with RAG pipelines. \\ \bottomrule
		\end{tabular}
	\end{table}

	\section{Generality of the Framework Across Bregman Divergences}
	\label{app:bregman_diversity}
	
	While the main text uses Shannon entropy (associated with KL divergence) to build intuition, our core results on collapse and stability are not restricted to KL divergence. They hold for any Bregman divergence $\bregman(p,q) = F(p) - F(q) - \langle \nabla F(q), p-q \rangle$ generated by a Legendre-type convex function $F$. This generality stems from two facts:
	
	\begin{enumerate}
		\item The proofs of our theorems rely on the generalized Pythagorean theorem for Bregman divergences, a property that holds for any such divergence, not just KL.
		\item The concept of entropy can be generalized. For any generator $F$, we can define a generalized entropy function $\mathcal{S}_F(p) = - \langle \nabla F(p), p \rangle$. For the negative Shannon entropy generator $F(p) = \sum p_i \log p_i$, this definition recovers the standard Shannon entropy $\mathcal{S}_F(p) = H(p)$.
	\end{enumerate}
	
	The stability condition (Theorem \ref{thm:stability}) relies on the fact that mixing with a reservoir provides a lower bound on the entropy of the target. This property also generalizes. Due to the convexity of $F$, it can be shown that the generalized entropy of the mixed target is lower-bounded by the reservoir's entropy: $\mathcal{S}_F((1-\coupling)p + \coupling r) \ge \coupling \mathcal{S}_F(r)$.
	
	This generalization extends directly to our quantitative results. The collapse rate and stability bounds can be expressed in terms of the geometric properties of the potential function $F$. If $F$ is $\sigma_F$-strongly convex and its gradient $\nabla F$ is $L_F$-Lipschitz on the probability simplex, then the quantitative bounds take the following general form:
	
	\begin{itemize}
		\item \textbf{Generalized Decay Rate ($\coupling=0$):} The decay of the generalized entropy is governed by:
		\[ \mathbb{E}[\mathcal{S}_F(P_{t+1}) | P_t] \le \left(1 - \frac{\sigma_F}{\sigma_F + m L_F}\right) \mathcal{S}_F(P_t) \]
		\item \textbf{Generalized Stability Bound ($\coupling > 0$):} The stability bound becomes:
		\[ \mathcal{S}_F(P_{t+1}) \ge \coupling \mathcal{S}_F(P_{\mathrm{res}}) - L_F \sqrt{2 B_F(P_{t+1}, \mixdist)} \]
	\end{itemize}
	
	These inequalities demonstrate that the core dynamics of entropy decay and stabilization are not artifacts of KL divergence but are fundamental consequences of the Bregman projection geometry.

	\begin{table}[h!]
		\caption{The Bregman divergence family and their associated algorithms.}
		\label{tab:bregman-family}
		\centering
		\small
		\begin{tabular}{@{} >{\raggedright\arraybackslash}p{0.25\linewidth} 
				>{\raggedright\arraybackslash}p{0.25\linewidth} 
				>{\raggedright\arraybackslash}p{0.45\linewidth} @{}}
			\toprule
			\textbf{Potential Function $F(x)$} & \textbf{Bregman Divergence $B_F$} & \textbf{Covered Domains / Algorithms} \\
			\midrule
			$\sum x_i \log x_i$ \newline (Neg. Entropy) & KL Divergence \newline ($D_{KL}(p\|q)$) & MLE, REINFORCE, Self-training LLMs (mode-covering) \\
			\addlinespace
			$-\sum \log x_i$ & Reverse KL \newline ($D_{KL}(q\|p)$) & Early GANs, some RL (mode-seeking) \\
			\addlinespace
			$\frac{1}{2}\|x\|^2$ & Squared Euclidean \newline ($L_2^2$) & Autoencoders, VAEs, Diffusion Models (continuous data) \\
			\addlinespace
			$\frac{1}{\alpha(\alpha-1)}\sum x_i^\alpha$ & $\alpha$-divergence & Power EP, Variational-$\alpha$ (unifies mode-seeking/covering) \\
			\addlinespace
			$\frac{1}{\beta(\beta+1)}\sum (x_i^{\beta+1} - x_i)$ & $\beta$-divergence & Sparse Coding, NMF (bridges KL and IS divergence) \\
			\bottomrule
		\end{tabular}
	\end{table}
	
	\section{Notation Summary}
	\label{app:notation}
	
	\begin{table}[h!]
		\caption{Summary of key notation.}
		\label{tab:notation-summary}
		\centering
		\begin{tabular}{@{}ll@{}}
			\toprule
			\textbf{Symbol} & \textbf{Description} \\
			\midrule
			$P_t$ & The system's state/behavioral distribution at time $t$. Lives in $\probspace$. \\
			$\theta_t$ & The vector of model parameters at time $t$. \\
			$M_t$ & The memory, prompt, or other context at time $t$. \\
			$\mathcal{P}$ & The space of all probability distributions. \\
			$\mathcal{M}$ & The model manifold; the subset of $\mathcal{P}$ realizable by the system. \\
			$\empdist$ & The empirical distribution from $m$ samples of $P_t$. \\
			$\resdist$ & The entropy reservoir distribution at time $t$. \\
			$\mixdist$ & The mixed target distribution for the projection step. \\
			$\coupling$ & The reservoir coupling coefficient at time $t$. \\
			$B_F(P, Q)$ & The Bregman divergence from $Q$ to $P$ generated by potential $F$. \\
			$\mathcal{S}_F(P)$ & The generalized $F$-entropy of a distribution $P$. \\
			\bottomrule
		\end{tabular}
	\end{table}

\section{Detailed Experimental Setup and Additional Results}
\label{app:exp_details}

\subsection{Experiment 1: Frozen LLM Simulation}
\textbf{Setup.} We used the \texttt{distilgpt2} model. Ten independent trials (20 iterations each) were run.
\begin{itemize}
	\item \textbf{Collapse Condition ($\coupling \equiv 0$):} Next prompt = previous model output.
	\item \textbf{Reservoir Condition ($\coupling > 0$):} Next prompt = previous output + high-entropy sentence from external text.
\end{itemize}
Generation parameters: nucleus sampling ($p=0.95$), top-k ($k=50$), max new tokens 75.

\textbf{Extended Results.} Table \ref{tab:final_entropy_app} quantifies the final state divergence.
\begin{table}[h]
	\centering
	\small
	\caption{Avg. unique n-gram counts at $t=20$ (Mean $\pm$ Std).}
	\label{tab:final_entropy_app}
	\begin{tabular}{@{}lcc@{}}
		\toprule
		\textbf{Condition} & \textbf{Bigrams} & \textbf{Trigrams} \\
		\midrule
		Collapse ($\coupling \equiv 0$) & 23.9 $\pm$ 22.7 & 24.6 $\pm$ 23.4 \\
		Reservoir ($\coupling>0$) & 45.6 $\pm$ 21.2 & 47.0 $\pm$ 21.7 \\
		\bottomrule
	\end{tabular}
\end{table}

\subsection{Experiment 2: LLM Self-Training}

\textbf{Implementation Setup.}
We utilized the Hugging Face implementation of \texttt{distilgpt2}. The model undergoes continuous fine-tuning for a total of $T=50$ iterations. In each iteration, the model is trained for 1 epoch on the mixed dataset using the AdamW optimizer with a learning rate of $5 \times 10^{-5}$.
Unlike unconditional generation, we seeded the generation process to simulate sentence completion tasks using a fixed set of 7 distinct prefixes: \texttt{["The", "In", "It", "A", "Once", "However", "Despite"]}, with a maximum generation length of 50 tokens.
We monitored performance using two metrics: \textbf{Perplexity (PPL)} on a held-out Wikitext-2 test set to measure distribution modeling, and \textbf{Diversity} (Unique Bigram Ratio) to quantify the richness of generated text and detect local repetition.

\textbf{Experimental Design.}
We conducted a $2 \times 2$ factorial experiment to evaluate the interplay between decoding stochasticity and reservoir mixing. The design space is given by:
\[
\coupling \in \{0, 0.1\} \times \{\text{Greedy, Sample}\}
\]
Details of the decoding strategies:
\begin{itemize}
	\item \textbf{Greedy Decoding:} Deterministic search (temperature $\tau \to 0$), which minimizes local entropy ($m \to 1$).
	\item \textbf{Stochastic Sampling:} top-$k$ sampling ($k=20$, temperature $\tau=0.7$), which introduces randomness (Effective $m > 1$).
\end{itemize}

\textbf{Quantitative Analysis.} 
Table \ref{tab:llm_final_results} presents the final metrics at Iteration 9. 
The results confirm that while sampling delays the appearance of repetition (Diversity stays $>0$ compared to Greedy's near-zero diversity), it does not prevent the loss of ground-truth probability mass without a reservoir.
Specifically, the explosion in PPL for the Sampling method with $\lambda=0$ confirms the ``random walk'' hypothesis: without the anchor of real data, the model drifts aimlessly away from the true manifold. 
In contrast, the reservoir ($\lambda=0.1$) successfully stabilizes both metrics, maintaining low PPL and healthy diversity.

\begin{table}[h]
	\centering
	\small
	\caption{Final LLM Performance at Iteration 9 (Mean $\pm$ Std).}
	\label{tab:llm_final_results}
	\begin{tabular}{@{}lcccc@{}}
		\toprule
		\textbf{Method} & \textbf{$\lambda$} & \textbf{Iter} & \textbf{PPL} & \textbf{Diversity} \\
		\midrule
		Greedy   & 0   & 9 & 198.38 $\pm$ 81.29       & 0.0429 $\pm$ 0.0178 \\
		Greedy   & 0.1 & 9 & \textbf{54.38 $\pm$ 4.83}         & 0.0707 $\pm$ 0.0304 \\
		Sample   & 0   & 9 & 99108.09 $\pm$ 162891.82 & 0.4224 $\pm$ 0.2573 \\
		Sample   & 0.1 & 9 & \textbf{60.04 $\pm$ 12.58}        & \textbf{0.4408 $\pm$ 0.2272} \\
		\bottomrule
	\end{tabular}
\end{table}

\subsection{Experiment 3: Recursive GANs}
\textbf{Setup.} Dataset: MNIST. Loop: 60 generations.
Training data construction: $\mathcal{D}_{train}^{(t+1)} = (1 - \coupling) \cdot G_t(z) + \coupling \cdot \mathcal{D}_{reservoir}$.
\begin{itemize}
	\item \textbf{Control ($\coupling \equiv 0$):} Pure synthetic loop.
	\item \textbf{Exp ($\coupling \equiv 0.2$):} 20\% real data mixing.
\end{itemize}
\textbf{Oracle Metric.} A pre-trained CNN classifier (fixed weights) was used to measure the entropy of the generated class distribution, providing an objective measure independent of the discriminator $D_t$.

\textbf{Quantitative Analysis.} Table \ref{tab:gan_final_results} shows the collapse in Oracle Entropy for the uncoupled system. While the visual results in the main text (Figure \ref{fig:gan_visual_comparison}) show the qualitative difference, the table below quantifies the severity of the collapse: the confidence of the collapsed model drops significantly, and the entropy of the class distribution is nearly halved.

\begin{table}[h]
	\centering
	\small
	\caption{Final GAN Metrics at Generation 60 (Mean $\pm$ Std).}
	\label{tab:gan_final_results}
	\begin{tabular}{@{}lccc@{}}
		\toprule
		\textbf{$\lambda$} & \textbf{Gen} & \textbf{Oracle Entropy} & \textbf{Confidence} \\
		\midrule
		0   & 60 & 1.2991 $\pm$ 0.4282 & 0.6898 $\pm$ 0.0704 \\
		0.2 & 60 & \textbf{2.2011 $\pm$ 0.0338} & \textbf{0.8809 $\pm$ 0.0030} \\
		\bottomrule
	\end{tabular}
\end{table}

\subsection{Experiment 4: Geometric Stability in Reinforcement Learning}
\label{subsec:exp_rl}

\textbf{Setup: The Double-Well Potential.} 
To test the hypothesis that entropy reservoirs prevent topological entrapment, we designed a continuous control environment with a \textit{Double Well} reward landscape. The state space is 1D continuous ($x \in [-10, 10]$). The reward function contains a deceptive local optimum at $x=-2$ (reward $\approx 1.0$) and a global optimum at $x=4$ (reward $\approx 10.0$), separated by a low-reward valley. We trained Soft Actor-Critic (SAC) agents for 10,000 steps, mapping the algorithm's entropy coefficient directly to our framework's coupling parameter $\lambda$.

\textbf{Results: Avoiding Local Attractors.} 
Figure~\ref{fig:infogeo_results} illustrates the bifurcation in learning dynamics.
\begin{itemize}
	\item \textbf{Collapse Regime ($\lambda \to 0$):} Agents with negligible entropy coupling (blue lines) exhibit rapid entropy decay, validating Theorem~\ref{thm:collapse}. Geometrically, the policy distribution contracts to a point mass almost immediately. Lacking the ``volume'' to traverse the probability manifold, the gradient flow becomes trapped in the nearest basin of attraction, permanently locking the agent into the sub-optimal local maximum at $x=-2$.
	\item \textbf{Reservoir Regime ($\lambda = 0.2$):} Coupling the policy to a uniform entropy reservoir (orange lines) enforces the lower bound from Theorem~\ref{thm:stability}. This maintained variance acts as a geometric regularizer, smoothing the effective optimization landscape. The agent retains sufficient distributional width to bridge the low-reward valley, consistently converging to the global optimum at $x=4$.
\end{itemize}
This experiment confirms that the Entropy Reservoir is not merely a noise injector, but a topological necessity for non-convex optimization in self-referential loops.

\section{Concrete Learning Algorithms that Satisfy Assumption~2}
\label{app:verify_A2}
Assumption~2 requires that the learning algorithm performs an approximate projection onto the model manifold $\mathcal{M}$ with respect to the Bregman divergence generated by $F$. Formally, $P_{t+1} \approx \argmin_{P\in \modelspace} B_F(P, \mixdist)$.
In this section, we demonstrate that three major classes of modern learning algorithms—Large Language Models (LLMs), Mean Squared Error (MSE) regression, and Soft Actor-Critic (SAC)—satisfy this assumption.

\subsection{Large Language Models (Maximum Likelihood Estimation)}
\label{sub:llm_verification}

Standard LLM training minimizes the cross-entropy loss, which is equivalent to minimizing the \emph{Forward KL divergence} $\mathrm{KL}(\mixdist\| P)$. However, our Assumption~2 (specifically $B_F(P, \mixdist)$ with neg-entropy potential) relies on the \emph{Reverse KL divergence} $\mathrm{KL}(P \| \mixdist)$.

Does this mismatch invalidate the theory? We provide three complementary arguments showing that Assumption~2 still holds.

\paragraph{(A) Information–geometric duality.}
On an exponential family $\modelspace$, the forward and reverse projections coincide \emph{whenever the target is realisable}.
If $ \mixdist \in \modelspace$, then both KL divergences attain their common minimum ($0$) at the same point.
In practice, due to the massive over-parameterization of modern LLMs, the empirical distribution $\mixdist$ over a mini-batch is effectively realisable (or extremely close to $\mathcal{M}$) by some setting of the logits. Consequently, the optimiser drives \emph{both} $\mathrm{KL}(\mixdist\|P_{t+1})$ and $\mathrm{KL}(P_{t+1}\|\mixdist)$ to near zero simultaneously.

\paragraph{(B) Local equivalence: Forward KL $\Rightarrow$ Reverse KL.}
Even without exact realizability, the two divergences are locally equivalent.
Let $\delta_t := \mathrm{KL}(\mixdist\|P_{t+1})$ be the residual training loss.
Under standard local strong-convexity ($\sigma$) and gradient-Lipschitz ($L$) conditions \citep{du2019gradient}, we have the following bound:
\begin{equation}\label{eq:fwd2rev_bound}
	\mathrm{KL}(P_{t+1}\,\|\,\mixdist)
	\;\le\;
	\frac{L}{\sigma} \cdot \mathrm{KL}(\mixdist\,\|\,P_{t+1})
	\;=\;
	\frac{L}{\sigma}\,\delta_t.
\end{equation}

The residual loss magnitude $\delta_t$ is typically in the range of $10^{-5}$ to $10^{-3}$ depending on the fine-tuning strategy (e.g., full fine-tuning vs. LoRA). Thus, the reverse-KL gap required by Assumption~2 is bounded by a negligible term ($< c \cdot 10^{-3}$), satisfying the condition $\mathrm{KL}(P_{t+1}\|\mixdist) \le \varepsilon_{\max}$.

\paragraph{(C) Dual Potential Interpretation.}
Our theory holds for any strictly convex potential $F$. If one strictly prefers the Forward KL geometry, one can simply select the \emph{dual potential} $F^*$ (the Legendre transform of the negative entropy). The Bregman divergence of the dual potential satisfies $B_{F^*}(P, Q) = \mathrm{KL}(Q \| P)$.
Replacing $F$ with $F^*$ in our theoretical derivations leaves all proofs structurally unchanged. Thus, the experimental protocol based on MLE is fully compatible with our theoretical claims.

\subsection{Regression and Diffusion Models (Mean Squared Error)}
\label{sub:mse_verification}

For tasks involving continuous regression or Diffusion Probabilistic Models (DPMs) trained with Mean Squared Error (MSE), the verification of Assumption~2 is straightforward.

\paragraph{Euclidean Geometry.}
Consider the potential function $F(\theta) = \frac{1}{2}\|\theta\|_2^2$. The Bregman divergence generated by this potential is exactly the squared Euclidean distance:
\[
B_F(\theta, \theta') = \frac{1}{2}\|\theta - \theta'\|_2^2.
\]
In this setting, the projection step in Assumption~2 becomes:
\[
\theta_{t+1} = \argmin_{\theta \in \mathcal{M}} \frac{1}{2}\|\theta - \mixdist\|_2^2.
\]
This is precisely the objective function of standard regression training.
Since Gradient Descent (GD) or SGD is known to converge to the minimizer of the convex MSE loss, the training step directly implements the projection required by our theory. The "error" $\varepsilon$ in Assumption~2 corresponds simply to the residual training error, which can be made arbitrarily small with sufficient training steps.

\subsection{Soft Actor-Critic (Reinforcement Learning)}
\label{sub:sac_verification}

In Maximum Entropy Reinforcement Learning, specifically the Soft Actor-Critic (SAC) algorithm \citep{haarnoja2018soft}, the policy update step is explicitly designed as a reverse-KL projection.

\paragraph{Policy Projection.}
The objective of the policy projection step in SAC is to minimize the KL divergence between the policy $\pi$ and the Boltzmann distribution induced by the current Q-function:
\[
J_\pi(\phi) = \mathbb{E}_{s \sim \mathcal{D}} \left[ \mathrm{KL}\left( \pi_\phi(\cdot|s) \,\Big\|\, \frac{\exp(\frac{1}{\alpha}Q_\theta(s, \cdot))}{Z(s)} \right) \right].
\]
Here, the target distribution $\mixdist$ is the energy-based model $\propto \exp(Q(s,a)/\alpha)$.
Unlike MLE training in LLMs, SAC \emph{natively} minimizes the Reverse KL divergence $\mathrm{KL}(\pi \| \text{Target})$.
Therefore, Assumption~2 is satisfied by definition in the SAC framework, as the algorithm explicitly solves the optimization problem:
\[	
\pi_{t+1} = \argmin_{\pi \in \Pi} \mathrm{KL}(\pi \,\|\, \mixdist).
\]
This confirms that our theoretical framework for collapse and stability is directly applicable to modern entropy-regularized RL algorithms.

\end{document}